%% file: neurips_2025.tex
\documentclass{article}

 \usepackage[preprint]{neurips_2025}

\usepackage[utf8]{inputenc} 
\usepackage[T1]{fontenc}    
\usepackage{hyperref}       
\usepackage{url}            
\usepackage{booktabs}       
\usepackage{amsfonts}       
\usepackage{nicefrac}       
\usepackage{microtype}      
\usepackage{xcolor}         

\usepackage{xcolor}

\usepackage{empheq}
\usepackage{bm}
\usepackage{amsthm}

\newtheorem{proposition}{Proposition}
\usepackage{pifont}
\usepackage{multirow} 
\usepackage{graphicx}
\usepackage{subcaption}
\usepackage{booktabs}
\usepackage{wrapfig}
\title{Beyond the Birkhoff Polytope: Spectral-Sphere-Constrained Hyper-Connections}

\author{Zhaoyi Liu\textsuperscript{1} \quad Haichuan Zhang\textsuperscript{2} \quad Ang Li\textsuperscript{1}\\
\textsuperscript{1}University of Maryland, College Park \\
\textsuperscript{2}University of Utah \\
{\tt\small \textsuperscript{1}\{zhaoyil, angliece\}@umd.edu} \\ \tt\small \textsuperscript{2}\{hc.zhang\}@utah.edu}

\begin{document}

\maketitle

\input{section/1_abstract}
\input{section/2_intro}
\input{section/3_preliminary}
\input{section/4_analysis}

\input{section/5_method}
\input{section/6_experiment}
\input{section/7_conclusion}

\bibliography{refs}
\bibliographystyle{unsrt}

\input{section/appendix}
\end{document}

%% file: section/1_abstract.tex
\begin{abstract}
Hyper-Connections (HC) generalize residual connections into multiple streams, employing residual matrices for cross-stream feature mixing to enrich model expressivity. However, unconstrained mixing disrupts the identity mapping property intrinsic to the residual connection, causing unstable training. To address this, Manifold-Constrained Hyper-Connections (mHC) and its variant restrict these matrices to the Birkhoff polytope (doubly stochastic matrices) via Sinkhorn iterations or permutation-based parameterizations. We reveal three limitations of this polytope constraint: 
(1) identity degeneration, where learned matrices collapse around the identity and diminish cross-stream interactions,
(2) an expressivity bottleneck, as the non-negativity constraint prevents subtractive feature disentanglement, and
(3) parameterization inefficiencies, manifesting as unstable Sinkhorn iterations or the factorial-scaling overhead of permutation-based parameterizations. To overcome these flaws, we propose Spectral-Sphere-Constrained Hyper-Connections (sHC). By geometrically shifting the feasible set from a rigid polytope to a spectral norm sphere, sHC allows negative entries, unlocking subtractive interactions for selective feature diversification. This shift eliminates unstable Sinkhorn projections and factorial parameterization, enabling expressive, non-degenerate residual matrices while preserving training stability. 

\end{abstract}

\begin{figure}[h]
    \centering
    \begin{minipage}[c]{0.32\textwidth}
        \centering
        \includegraphics[width=\linewidth]{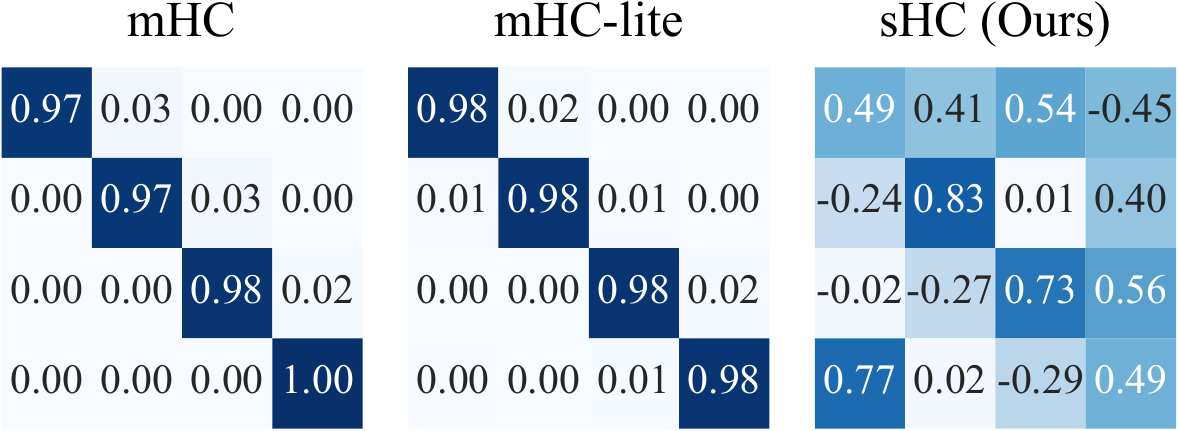}
        \label{fig:intro_left}
    \end{minipage}
    \begin{minipage}[c]{0.34\textwidth}
        \centering
        \includegraphics[width=\linewidth]{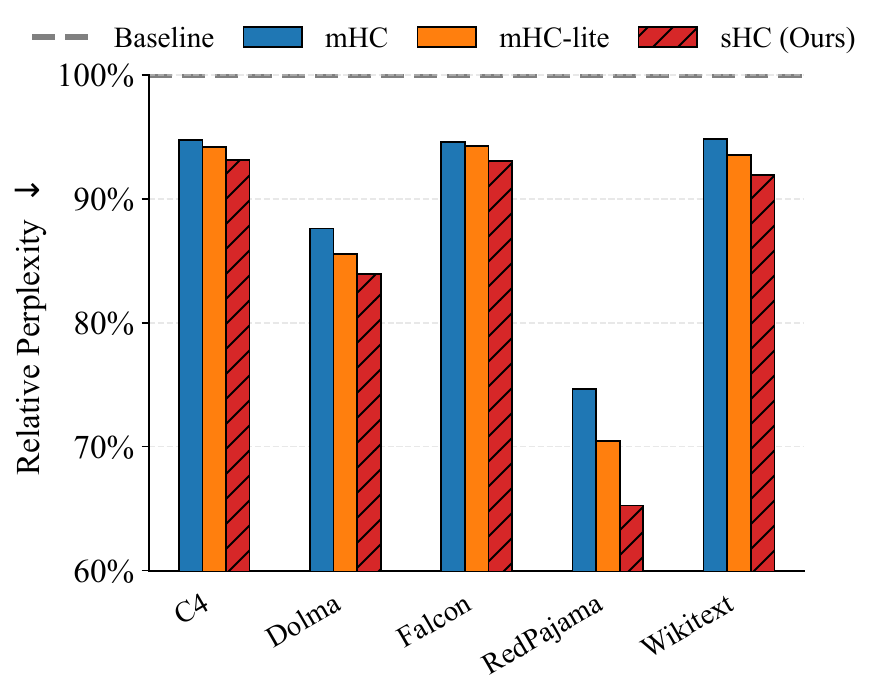}
        \label{fig:intro_right_top}
    \end{minipage}
    \begin{minipage}[c]{0.29\textwidth}
    \centering
        \includegraphics[width=\linewidth]{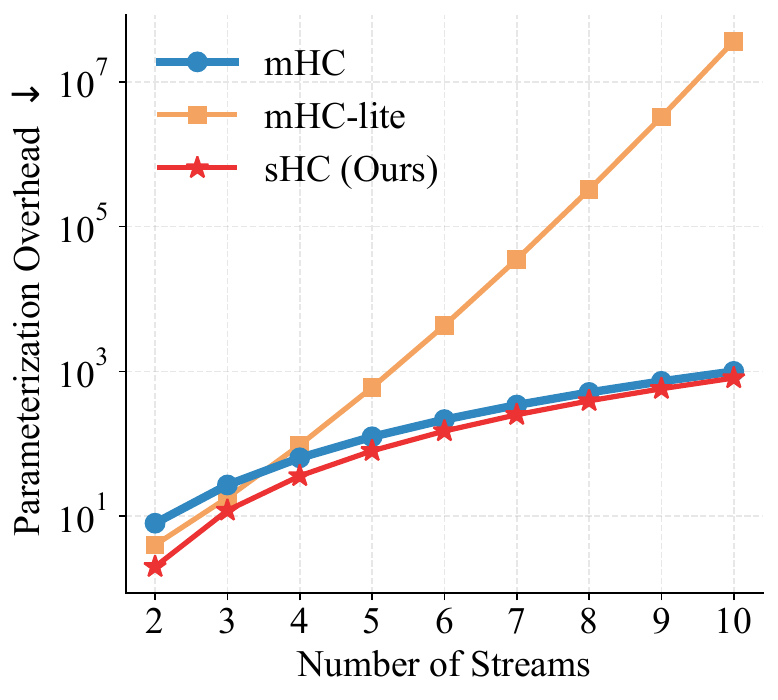}
        \label{fig:intro_right_bottom}
    \end{minipage}
    \vspace{-0.8em}
    \caption{sHC overcomes the identity degeneration, expressivity bottleneck, and parameterization inefficiencies in existing manifold-constrained hyper-connections (mHC~\cite{xie2025mhc}) and its permutation-based variant (mHC-lite~\cite{yang2026mhc}). \textbf{Left}: Learned residual matrices (4 streams). Residual matrices of mHC and mHC-lite degenerate into the identity mapping, whereas sHC leverages diverse signed entries for subtractive mixing. \textbf{Middle}: Language Modeling Performance. Perplexity is presented relative to the standard residual connection baseline. sHC yields observable perplexity reductions across all five corpora. \textbf{Right}: Parameterization overhead. As the number of streams increases, sHC eliminates the factorial explosion of auxiliary parameters inherent to mHC-lite.}
    \label{fig:teaser}
\end{figure}

%% file: section/2_intro.tex
\section{Introduction}
Residual connections~\cite{he2016deep} have been a cornerstone of deep learning for over a decade, stabilizing gradient propagation through identity mappings and becoming a standard component of modern deep networks, including large language models~\cite{liu2024deepseek,touvron2023llama,brown2020language}. Hyper-Connections (HC)~\cite{zhu2024hyper} have recently extended this traditional single-stream residual connection into parallel residual streams by using a dynamic residual matrix at each layer to mix the features across the streams, increasing the model's topological complexity and capacity. However, unconstrained residual matrices compromise the identity preservation property intrinsic to the residual connection, which causes training instability~\cite{xie2025mhc}. 

To address this, DeepSeek's Manifold-Constrained Hyper-Connections (mHC)~\cite{xie2025mhc} proposes constraining residual matrices to be doubly stochastic to preserve the identity mapping property. Doubly stochastic matrices belong to the Birkhoff polytope, characterized by non-negative entries and unit row and column sums. This structure theoretically bounds the spectral norm by $1$ to mitigate gradient explosion, while preserving the mean component of the residual streams~\cite{xie2025mhc}.

To enforce this constraint, mHC projects residual matrices onto the Birkhoff polytope via Sinkhorn--Knopp (SK) iteration~\cite{sinkhorn1967concerning}, which iteratively normalizes rows and columns to approximate constraints. However, SK yields only an unstable approximate projection onto the polytope. As reported in~\cite{yang2026mhc}, the resulting constraint violations accumulate across depth, potentially undermining stability.

A recent variant, mHC-lite~\cite{yang2026mhc}, guarantees exact doubly stochasticity by parameterizing residual matrices as a convex combination of permutation matrices. This parametrization introduces factorial growth in auxiliary parameters, leading to prohibitive complexity and limited scalability.

Beyond these parameterization inefficiencies, we identify two intrinsic limitations of the Birkhoff polytope constraint adopted in these methods. 
First, it is prone to \textbf{\textit{identity degeneration}}, where the learned residual matrices in each layer concentrate around the identity, practically abandoning the intended cross-stream interactions.
Second, the non-negativity constraint imposes a structural \textbf{\textit{expressivity bottleneck}}: residual streams are restricted to convex combinations, precluding subtractive interactions and limiting the model’s ability to suppress noise or disentangle features. (Detailed analysis is in \S~\ref{sec:observaton}.)

To overcome these limitations, we propose Spectral-Sphere-Constrained Hyper-Connections (sHC). Instead of confining the residual matrices to Birkhoff polytope, we geometrically shift the feasible set to a spectral norm sphere. As shown in Figure~\ref{fig:teaser}, this shift yields three key advantages: \ding{182} By permitting negative matrix entries, sHC explicitly unlocks the model's capacity for subtractive feature interactions and selective  feature diversification. \ding{183} sHC produces non-degenerate residual matrices that support expressive feature interactions, improving the model performance. \ding{184} Restricted on a sphere instead of a faceted polytope, we eliminate factorial parameterization overhead and ensure stable spectral-norm control without relying on Sinkhorn iterations.

In summary, our contributions are:
\begin{itemize}
     \item We introduce Spectral-Sphere-Constrained Hyper-Connections (sHC), which reformulates hyper-connection constraints within a spectral-norm sphere, mitigating identity degeneration, alleviating the expressivity bottleneck, and eliminating factorial parameterization overhead.
     \item Experiments demonstrate that sHC improves model capability over existing hyper-connection methods while preserving the structural constraints across depth.
     \item The expressivity, stability, and scalability of sHC provide a new and viable design direction for residual connections in deep learning architectures.
\end{itemize}

%% file: section/3_preliminary.tex
\section{Related Work}
Residual connections~\cite{he2016deep} stabilize deep network training by introducing identity skip connections. Expanding on this, Hyper-Connections (HC)~\cite{zhu2024hyper} introduce parallel residual streams mixed by dynamic matrices to enhance model capacity. Concurrently, Frac-Connections~\cite{zhu2025frac} explore fragmenting streams into chunks as an alternative topology to reduce the memory access costs of parallel residual streams. Manifold-Constrained Hyper-Connections (mHC)~\cite{xie2025mhc} and its variant~\cite{yang2026mhc} directly target the training instability inherent in HC, where unconstrained feature mixing compromises identity preservation. By confining the residual matrices to the doubly stochastic space, they restore stability. However, this rigid constraint introduces parameterization inefficiency and expressivity bottlenecks. In this work, we align with the hyper-connection paradigm, focusing on resolving the expressivity and efficiency bottlenecks of full-stream mixing rather than exploring fractional topologies.

\section{Preliminary}
\label{sec:prelim}
\textbf{Hyper-Connections (HC).} Despite residual connections'~\cite{he2016deep} widespread success, the single-stream design restricts signal flow to a single pathway, potentially limiting the model's capacity. 
To enrich the expressivity of the model, Hyper-Connections~\cite{zhu2024hyper} extend the single-stream paradigm to $n$ parallel residual streams. Let $X_l=(\bm{x}_{l,1}^\top,\bm{x}_{l,2}^\top,...,\bm{x}_{l,n}^\top)^\top\in \mathbb{R}^{n \times C}$ represent the expanded features of $n$ streams at the $l$-th layer. HC introduces a dynamic mixing mechanism:
\begin{equation}
    X_{l+1}=\mathcal{H}_l^{\mathrm{res}}X_l+(\mathcal{H}_l^{\mathrm{post}})^\top \mathcal{F}(\mathcal{H}_l^{\mathrm{pre}}X_l,\mathcal{W}_l)
    \label{eq:HC}
\end{equation}
Here, $\mathcal{F}(\cdot,\mathcal{W}_l)$ denotes the learnable layer transformation with parameter $\mathcal{W}_l$ in the branch (e.g., Attention or MLP block). $\mathcal{H}_l^{\mathrm{res}}\in
\mathbb{R}^{n\times n}$ represents a learnable residual matrix that mixes features within the residual streams. Similarly, $\mathcal{H}_l^{\mathrm{pre}}\in
\mathbb{R}^{1\times n}$ aggregates features from
the $(n\times C)$-dim stream into a $(1\times C)$-dim layer branch input, and conversely, $\mathcal{H}_l^{\mathrm{post}}\in
\mathbb{R}^{1\times n}$ maps the layer branch output back onto the streams. However, without constraints on these residual matrices, HC may suffer from severe training instability~\cite{xie2025mhc}.

\textbf{Manifold-Constrained Hyper-Connections (mHC).}
To ensure training stability, mHC~\cite{xie2025mhc} constrains $\mathcal{H}_l^{\mathrm{res}}$ to lie in the Birkhoff polytope $\mathcal{B}_n$, i.e., the set of doubly stochastic matrices. This polytope is a subset of the affine subspace $\mathcal{A}_n=\{\mathcal{H}\in\mathbb{R}^{n\times n} \mid 
\mathcal{H}\mathbf{1}_n=\mathbf{1}_n,\ 
\mathbf{1}_n^\top\mathcal{H}=\mathbf{1}_n^\top \}$, which is obtained by adding the element-wise non-negativity constraint to $\mathcal{A}_n$:
\begin{equation}
\mathcal{B}_n = \{\mathcal{H}\in\mathcal{A}_n \mid 
\mathcal{H} \ge 0 \}.
\end{equation}
The affine constraint ensures that the uniform vector $\mathbf{1}_n$ remains invariant, thereby conserving the mean component across residual streams. Imposing the additional non-negativity condition bounds the spectral norm to 1 (i.e., $\|\mathcal{H}_l^{\mathrm{res}}\|_2 = 1$), which prevents signal amplification and stabilizes deep propagation.

As such, mHC can be fully formulated as:
\begin{empheq}[left=\empheqlbrace]{equation}
\begin{aligned}
&\bm{x}_l'=\text{RMSNorm}(\bm{x}_l) \\
&\mathcal{H}_l^{\mathrm{pre}}=\sigma\big(\alpha_l^{\mathrm{pre}}\cdot(\bm{x}_l'W_l^{\mathrm{pre}})+ \bm{b}_l^{\mathrm{pre}}\big) \\
&\mathcal{H}_l^{\mathrm{post}}=2\sigma\big(\alpha_l^{\mathrm{post}}\cdot(\bm{x}_l'W_l^{\mathrm{post}})+ \bm{b}_l^{\mathrm{post}}\big) \\
&\mathcal{H}_l^{\mathrm{res}}=\text{SK}\big(\alpha_l^{\mathrm{res}}\cdot \text{mat}(\bm{x}_l'W_l^{\mathrm{res}})+ \bm{b}_l^{\mathrm{res}}\big)
\end{aligned}
\label{eq:mHC}
\end{empheq}
where $\bm{x}_l\in\mathbb{R}^{1\times nC}$ is the vector flattened from the expanded input $X_l$. $W_l^{\mathrm{pre}},W_l^{\mathrm{post}}\in \mathbb{R}^{nC\times n}$ and $W_l^{\mathrm{res}}\in\mathbb{R}^{nC\times n^2}$ are linear projections for dynamic mappings. The terms $\bm{b}_l^{\mathrm{pre}},\bm{b}_l^{\mathrm{post}}\in\mathbb{R}^{1\times n}$ and $\bm{b}_l^{\mathrm{res}}\in\mathbb{R}^{n\times n}$ are learnable biases. $\alpha_l^{\mathrm{pre}},\alpha_l^{\mathrm{post}},\alpha_l^{\mathrm{res}}$ are scalar gating factors. $\text{RMSNorm}(\cdot)$ refers to the RMSNorm~\cite{zhang2019root}. $\sigma(\cdot)$ denotes the Sigmoid function. $\text{mat}(\cdot)$ reshapes a matrix from $\mathbb{R}^{1\times n^2}$ to $\mathbb{R}^{n\times n}$ while $\text{SK}(\cdot)$ denotes the Sinkhorn–Knopp iteration which iteratively alternates row-wise and column-wise normalization to enforce approximate doubly stochasticity. 
However, finite SK iterations cannot guarantee exact doubly stochasticity. The resulting approximation errors can accumulate across layers, potentially undermining the stability of deep networks. 

\textbf{Permutation-based Parameterization.} Instead of approximate SK projection, mHC-lite~\cite{yang2026mhc} employs the Birkhoff-von Neumann theorem~\cite{birkhoff1946three,von1950certain} to parameterize $\mathcal{H}_l^{\mathrm{res}}$ as convex combinations of permutation matrices to achieve exact doubly stochasticity:
\begin{empheq}[left=\empheqlbrace]{equation}
\begin{aligned}
&\bm{a}_l=\text{softmax}\big(\alpha_l^{\mathrm{res}}\cdot(\bm{x}'_lW_l^{\mathrm{res}})+\bm{b}_l^{\mathrm{res}}\big) \\
& \mathcal{H}_l^{\mathrm{res}}=\sum_{i=1}^{n!}(\bm{a}_l)_iP_i
\end{aligned}
\label{eq:mHC-lite}
\end{empheq}
Here, the coefficients $\bm{a}_l$ are predicted from the normed flattened input vector $\bm{x}'_l$. $P_i\in \{P_i\}_{i=1}^{n!}\subset[0,1]^{n \times n}$ is the set of all permutation matrices. While mHC-lite guarantees $\mathcal{H}_l^{\mathrm{res}}$ to be exactly doubly stochastic, the linear projection weight $W_l^{\mathrm{res}}\in \mathbb{R}^{nC\times n!}$ and bias $\bm{b}_l^{\mathrm{res}}\in \mathbb{R}^{1\times n!}$ scale factorially with the number of streams $n$, introducing a heavy parameterization overhead.

%% file: section/4_analysis.tex
\section{Observation}
\label{sec:observaton}
\begin{figure}[t]
    \centering
    \begin{minipage}[c]{0.39\linewidth}
        \centering
        \includegraphics[width=\linewidth]{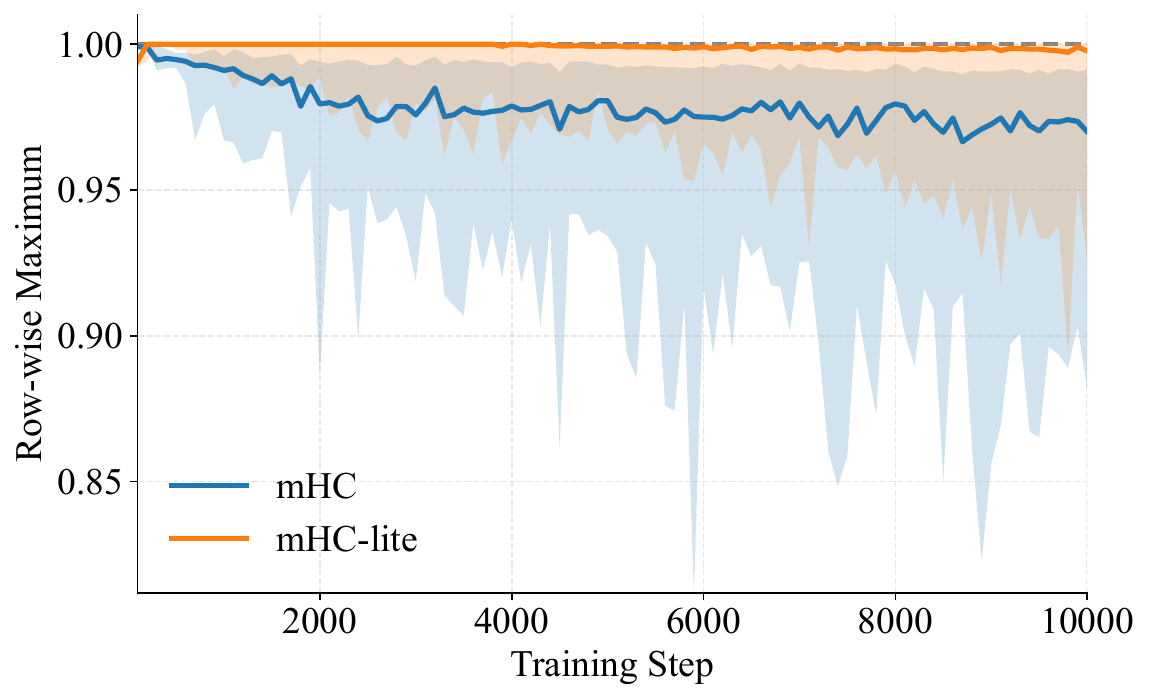}
    \end{minipage}
    \begin{minipage}[c]{0.39\linewidth}
        \centering
        \includegraphics[width=\linewidth]{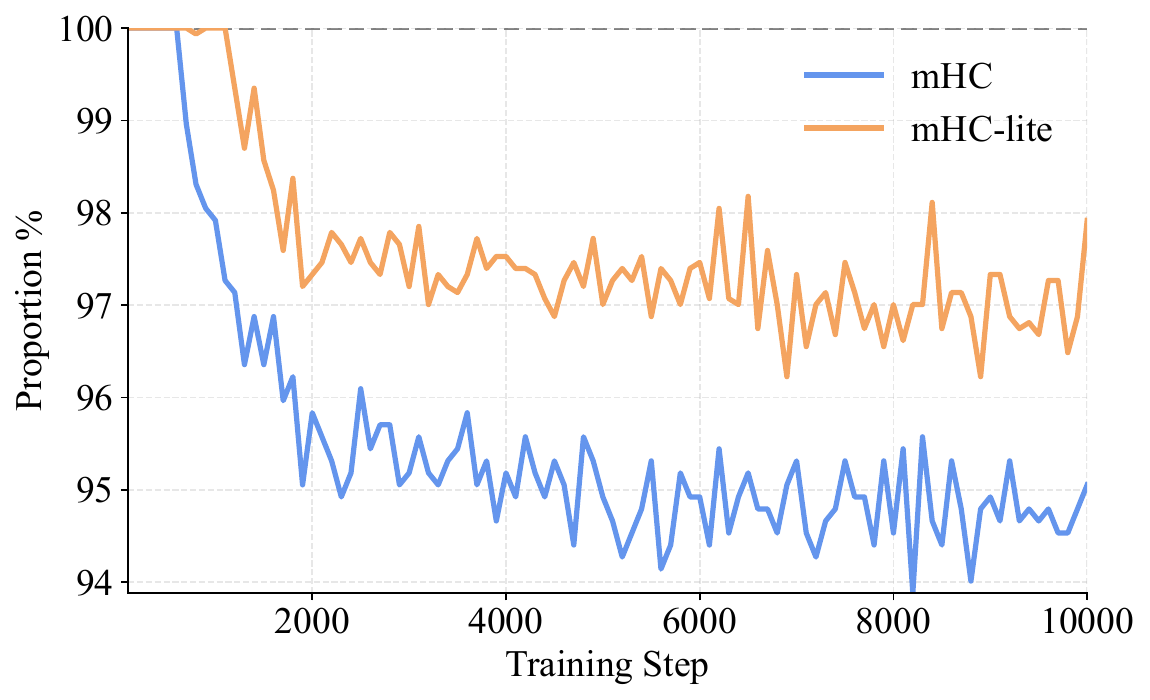}
    \end{minipage}
    \caption{Dynamics of $\mathcal{H}_l^{\mathrm{res}}$ for mHC and mHC-lite during training. \textbf{Left}: the row-wise maximum entries of $\mathcal{H}_l^{\mathrm{res}}$. The solid lines represent the median, and the shaded regions show the 10th to 90th percentiles. \textbf{Right}: the proportion of $\mathcal{H}_l^{\mathrm{res}}$ where all row maximums are on the diagonal. Statistics are computed across all layers of the model at each training step.}
    \label{fig:H_observation}
\end{figure}
To evaluate the practical expressivity of these Birkhoff polytope based hyper-connections (mHC and mHC-lite), we analyze their training dynamics of residual matrix $\mathcal{H}_l^\mathrm{res}$ and pairwise cosine similarity among residual streams after being mixed by $\mathcal{H}_l^\mathrm{res}$ across all layers of the model at each training step. Experiments are conducted on a 12-layer, 0.12B-parameter nanoGPT model~\cite{karpathy2022nanogpt}.

\subsection{Identity Degeneration}
As shown in Figure~\ref{fig:H_observation}, for both mHC and mHC-lite, the median row-wise maximum of the learned residual matrices $\mathcal{H}_l^{\mathrm{res}}$ remains tightly concentrated around 1 throughout training. The 10th–90th percentile bands (the shaded regions in the figure) exhibit minimal dispersion, with the widest range only spanning 0.85–0.99 for mHC and 0.95-1 for mHC-lite. Moreover, the fraction of matrices whose row maxima lie strictly on the diagonal stays consistently high (above 96\% for mHC-lite and 94\% for mHC).
Since $\mathcal{H}_l^{\mathrm{res}}$ is doubly stochastic (non-negative with unit row and column sums), the concentration of row-wise maxima on the diagonal with values approaching 1 implies that the learned matrices degenerate around the identity, up to small off-diagonal tails. This indicates that the model abandons active cross-stream interactions at individual layers, relying instead on these small tails for a slow and passive feature diffusion accumulated over depths.

This degeneration is also reflected in the pairwise similarity within the residual streams. As illustrated in Figure~\ref{fig:obs_stream_sim}, the similarity trajectories of mHC closely follow those of the identity mapping baseline (where $\mathcal{H}_l^{\mathrm{res}}$ is fixed as an identity matrix while keeping all other settings identical to mHC), with only a slight increase in similarity. This marginal shift, relative to the overall similarity dynamics driven by model learning itself, indicates that feature evolution is predominantly governed by the nonlinear branches rather than the hyper-connections. (We observe the same phenomenon in mHC-lite. See Appendix ~\ref{app:stream_sim} for details.)

This degeneration may explain the resulting gradient stability of mHC and mHC-lite. However, prioritizing this form of apparent stability runs counter to the original design motivation of hyper-connections, which aim to enhance the model’s topological expressivity through active cross-stream feature interactions~\cite{xie2025mhc,zhu2024hyper}.

\begin{figure}[h]
    \centering
    \includegraphics[width=0.8\linewidth]{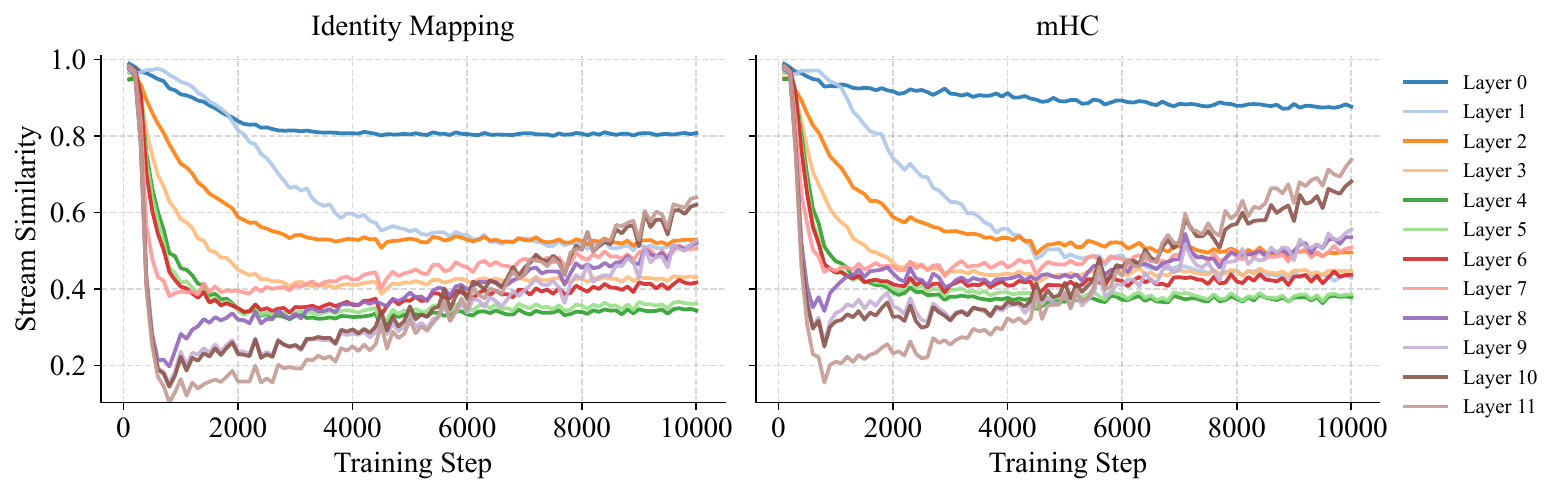}
    \caption{Mean pairwise cosine similarity among residual streams after being mixed by $\mathcal{H}_l^{\mathrm{res}}$. The left shows the baseline with identity mapping (where $\mathcal{H}_l^{\mathrm{res}}$ is fixed as an identity matrix while keeping all other settings identical to mHC). The right shows mHC. Each colored line tracks a layer depth.}
    \label{fig:obs_stream_sim}
    
\end{figure}
\subsection{Expressivity Bottleneck}
As shown in Figure~\ref{fig:obs_stream_sim}, for both the identity baseline and mHC, pair-wise stream similarity rapidly decreases and stabilizes. This aligns with studies revealing a natural, structural drive toward feature independence in internal representation learning~\cite{dong2021attention,valeriani2023geometry,skean2025layer}. 
In contrast, mHC consistently exhibits a small but systematic elevation in similarity across layers. We pinpoint that this effect is structural rather than incidental. Since $\mathcal{H}_l^{\mathrm{res}}$ is doubly stochastic, it enforces non-negative convex mixing across residual streams. Even minimal off-diagonal entries thus induce averaging, producing a persistent upward bias in inter-stream similarity, echoing recent theoretical finding~\cite{liu2026homogeneity} on spectral collapse in such doubly stochastic networks.

We thus hypothesize that the polytope constraint introduces an intrinsic expressivity bottleneck: as observed in Figure~\ref{fig:obs_stream_sim} and discussed above, residual streams in internal layers naturally decorrelate during training. However, $\mathcal{H}_l^{\mathrm{res}}$ is restricted to convex mixing and therefore can only average features across streams. Lacking the ability to form signed interactions, mHC and mHC-lite provide no mechanism to actively enhance representation diversification, limiting its topological expressivity.


%% file: section/5_method.tex
\section{Methodology}
Motivated by these observations, we propose Spectral-Sphere-Constrained Hyper-Connections (sHC). Instead of confining the residual matrix $\mathcal{H}_l^{\mathrm{res}}$ in the Birkhoff polytope $\mathcal{B}_n$, we reformulate it into a spectral norm sphere restricted to the affine subspace $\mathcal{A}_n$. 
\subsection{Affine-Constrained Spectral Sphere}
We define an affine-constrained spectral norm sphere $\mathcal{S}_n = \{\mathcal{H} \in \mathcal{A}_n \mid \|\mathcal{H}\|_2 = 1\}$. Constraining the residual matrix $\mathcal{H}_l^{\mathrm{res}}$ within $\mathcal{S}_n$ guarantees three critical properties:
\begin{enumerate}
    \item \textbf{Mean Preservation}. $\mathcal{S}_n$ resides in $\mathcal{A}_n$, where the affine constraint ensures that the uniform vector $\mathbf{1}_n$ remains invariant, thereby conserving the mean component across residual streams.
    \item \textbf{Spectral Stability}. Any $\mathcal{H}_l^{\mathrm{res}} \in \mathcal{S}_n$ satisfies $\|\mathcal{H}_l^{\mathrm{res}}\|_2 = 1$. Moreover, $\mathcal{S}_n$ is closed under multiplication (proof in Appendix~\ref{app:closure}), preventing signal amplification and gradient explosion in deep networks.
    \item \textbf{Enhanced Expressivity.} Since all doubly stochastic matrices possess a spectral norm of 1, the Birkhoff polytope $\mathcal{B}_n$ is contained within $\mathcal{S}_n$. By dropping the non-negativity constraint, $\mathcal{S}_n$ allows negative entries, enabling subtractive interactions such as selective noise suppression and feature diversification, which are prohibited in mHC and mHC-lite.
\end{enumerate}

\textbf{Parameterization Equivalence via Spectral Decoupling.} 
Directly parameterizing $\mathcal{H}_l^{\mathrm{res}} \in \mathcal{A}_n$ with a strict unit spectral norm is challenging. To address this, we note that the affine subspace $\mathcal{A}_n$ is a translation of the zero-marginal subspace $\mathcal{Z}_n=\{\mathcal{H}\in\mathbb{R}^{n\times n}\mid \mathcal{H}\mathbf{1}_n=\mathbf{0}_n,\mathbf{1}_n^\top\mathcal{H}=\mathbf{0}_n^\top\}$ by the uniform matrix $J=\frac{1}{n}\mathbf{1}_n\mathbf{1}_n^\top$ (Appendix~\ref{app:affine_trans}). Thus, any target residual matrix admits a unique decomposition $\mathcal{H}_l^{\mathrm{res}} = J + \mathcal{H}_l^{\mathrm{disp}}$, where $\mathcal{H}_l^{\mathrm{disp}} \in \mathcal{Z}_n$. 

Then consider any input vector $\bm{x} \in \mathbb{R}^n$, which can be uniquely decomposed as $\bm{x} = \bm{x}_{\parallel} + \bm{x}_{\perp},  \bm{x}_{\parallel} \in \mathrm{span}\{\mathbf{1}_n\}, \ \bm{x}_{\perp} \in \mathbf{1}_n^\perp.$ The operators $J$ and $\mathcal{H}_l^{\mathrm{disp}}$ act orthogonally on these components:
$J \bm{x}_{\perp} = \mathbf{0}_n,  \mathcal{H}_l^{\mathrm{disp}}\bm{x}_{\parallel} = \mathbf{0}_n.$
This decouples their spectral contributions and leads to Proposition~\ref{prop:spectral_bound} (proof deferred to Appendix~\ref{app:bound_proof}).

\begin{proposition}[Spectral Decoupling]
\label{prop:spectral_bound}
Let $J = \frac{1}{n}\mathbf{1}_n\mathbf{1}_n^\top$. For any displacement matrix $\mathcal{H}_l^{\mathrm{disp}} \in \mathcal{Z}_n$, the spectral norm of the corresponding residual matrix $\mathcal{H}_l^{\mathrm{res}} = J + \mathcal{H}_l^{\mathrm{disp}}$ satisfies:
\begin{equation}
    \|\mathcal{H}_l^{\mathrm{res}}\|_2 = \max\left(\|J\|_2, \|\mathcal{H}_l^{\mathrm{disp}}\|_2\right)
\end{equation}
\end{proposition}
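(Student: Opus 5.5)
The plan is to exploit the orthogonal splitting $\mathbb{R}^n = \mathrm{span}\{\mathbf{1}_n\} \oplus \mathbf{1}_n^\perp$ noted just before the statement, and to show that $\mathcal{H}_l^{\mathrm{res}}$ acts block-diagonally with respect to it, with $J$ as one block and $\mathcal{H}_l^{\mathrm{disp}}$ as the other. The spectral norm of a block-diagonal operator with respect to an orthogonal decomposition is the maximum of the block norms, which gives the claim.

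First I would check that both subspaces are invariant under both operators, which requires the row and column constraints of $\mathcal{Z}_n$ separately.
\begin{itemize}
\item For $J$: $J\mathbf{1}_n = \mathbf{1}_n$ and $J\bm{x}_\perp = \mathbf{0}_n$. Hence $J$ maps $\mathrm{span}\{\mathbf{1}_n\}$ to itself and kills $\mathbf{1}_n^\perp$, and $\|J\|_2 = 1$.
\item For $\mathcal{H}_l^{\mathrm{disp}}$: $\mathcal{H}_l^{\mathrm{disp}}\mathbf{1}_n = \mathbf{0}_n$ kills the parallel part. The column condition $\mathbf{1}_n^\top \mathcal{H}_l^{\mathrm{disp}} = \mathbf{0}_n^\top$ shows that the image of $\mathcal{H}_l^{\mathrm{disp}}$ lies in $\mathbf{1}_n^\perp$.
\end{itemize}
Therefore, for any $\bm{x} = \bm{x}_\parallel + \bm{x}_\perp$,
\[
\mathcal{H}_l^{\mathrm{res}}\bm{x} = \bm{x}_\parallel + \mathcal{H}_l^{\mathrm{disp}}\bm{x}_\perp,
\]
where the two summands are orthogonal.

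Next I would compute norms by the Pythagorean theorem:
\[
\|\mathcal{H}_l^{\mathrm{res}}\bm{x}\|^2 = \|\bm{x}_\parallel\|^2 + \|\mathcal{H}_l^{\mathrm{disp}}\bm{x}_\perp\|^2 \le \|J\|_2^2\|\bm{x}_\parallel\|^2 + \|\mathcal{H}_l^{\mathrm{disp}}\|_2^2\|\bm{x}_\perp\|^2 \le M^2\|\bm{x}\|^2,
\]
with $M = \max(\|J\|_2, \|\mathcal{H}_l^{\mathrm{disp}}\|_2)$. This gives the upper bound.

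For the matching lower bound, I would split into two cases.
\begin{itemize}
\item Testing on $\bm{x} = \mathbf{1}_n/\sqrt{n}$ gives norm $1 = \|J\|_2$.
\item Since $\mathcal{H}_l^{\mathrm{disp}}$ vanishes on $\mathrm{span}\{\mathbf{1}_n\}$, its norm is attained on $\mathbf{1}_n^\perp$. Concretely, $\|\mathcal{H}_l^{\mathrm{disp}}\bm{x}\| = \|\mathcal{H}_l^{\mathrm{disp}}\bm{x}_\perp\|$ and $\|\bm{x}_\perp\| \le \|\bm{x}\|$, so a maximizer can be taken in $\mathbf{1}_n^\perp$. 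Testing on that unit vector gives $\|\mathcal{H}_l^{\mathrm{res}}\bm{x}\| = \|\mathcal{H}_l^{\mathrm{disp}}\|_2$.
\end{itemize}

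The only delicate step is the invariance argument, specifically that the image of $\mathcal{H}_l^{\mathrm{disp}}$ is orthogonal to $\mathbf{1}_n$. This is what makes the cross terms vanish in the Pythagorean expansion. The input-side statement $\mathcal{H}_l^{\mathrm{disp}}\bm{x}_\parallel = \mathbf{0}_n$ given before the proposition is not enough on its own; the column-sum condition is what supplies the output-side orthogonality. The rest is routine.
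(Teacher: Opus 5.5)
Your proposal is correct and follows essentially the same route as the paper. Both split $\mathbb{R}^n$ orthogonally into $\mathrm{span}\{\mathbf{1}_n\}$ and $\mathbf{1}_n^\perp$, use the column-sum condition to get $\mathcal{H}_l^{\mathrm{disp}}\bm{x}_\perp \perp \mathbf{1}_n$, obtain the upper bound by Pythagoras, and obtain the lower bound by restricting to each subspace. Your explicit argument that $\|\mathcal{H}_l^{\mathrm{disp}}\|_2$ is attained on $\mathbf{1}_n^\perp$ fills in a step the paper only asserts.
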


Since $\|J\|_2 = 1$, Proposition~\ref{prop:spectral_bound} establishes that enforcing $\|\mathcal{H}_l^{\mathrm{res}}\|_2 = 1$ in the affine subspace $\mathcal{A}_n$ is equivalent to bounding $\|\mathcal{H}_l^{\mathrm{disp}}\|_2 \le 1$ in the zero-marginal subspace $\mathcal{Z}_n$. 

\subsection{Spectral-Sphere-Constrained Hyper-Connections}
\begin{figure}[t]
    \centering
    \includegraphics[width=0.4\linewidth]{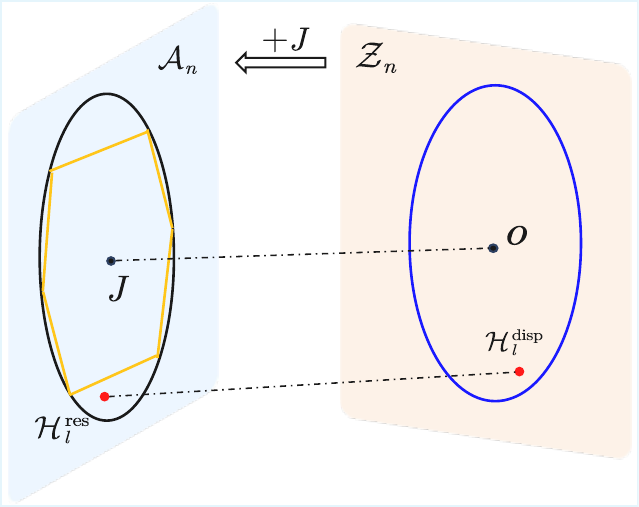}
    \caption{Overview of Spectral-Sphere-Constrained Hyper-Connections (sHC). The right orange plane depicts the zero-marginal subspace $\mathcal{Z}_n$, where the blue disk centered at the origin $O$ represents the bounded spectral region $\|\mathcal{H}_l^{\mathrm{disp}}\|_2 \le 1$. The SVD parameterization generates the displacement matrix $\mathcal{H}_l^{\mathrm{disp}}$ (red point) within this region. The left blue plane illustrates the target affine space $\mathcal{A}_n$, containing the Birkhoff polytope $\mathcal{B}_n$ (inner orange polygon), which is enclosed by the affine-constrained spectral norm sphere $\mathcal{S}_n$ (black circle centered at the uniform matrix $J$). The affine shift $+J$ maps the origin $O$ to $J$ and the displacement $\mathcal{H}_l^{\mathrm{disp}}$ to the final residual matrix $\mathcal{H}_l^{\mathrm{res}}$.}
    \label{fig:overview}
\end{figure}

As established in Proposition~\ref{prop:spectral_bound}, enforcing 
$\mathcal{H}_l^{\mathrm{res}} \in \mathcal{S}_n$ reduces to constructing a displacement matrix 
$\mathcal{H}_l^{\mathrm{disp}} \in \mathcal{Z}_n$ satisfying 
$\|\mathcal{H}_l^{\mathrm{disp}}\|_2 \le 1$. Since $\mathcal{H}_l^{\mathrm{disp}}$ lies in the zero-marginal subspace $\mathcal{Z}_n$, its rank is at most $n-1$. 
We therefore parameterize it via the compact singular value decomposition (SVD):
\begin{equation}
\mathcal{H}_l^{\mathrm{disp}} = U_l \Sigma_l V_l^\top
\end{equation}
where $U_l,V_l \in \mathbb{R}^{n\times (n-1)}$ satisfy 
$U_l^\top U_l = V_l^\top V_l = I_{n-1}$, and 
$\Sigma_l \in \mathbb{R}^{(n-1)\times(n-1)}$ is a diagonal matrix containing singular values. 

To satisfy the zero-sum constraint (i.e. $\mathcal{H}_l^{\mathrm{disp}}\in\mathcal{Z}_n$), we constrain the singular vectors to lie in $\mathbf{1}_n^\perp$ via factorizing:
\begin{equation}
U_l = U_{\mathcal Z} U_l^{\mathrm{core}}, \qquad V_l = U_{\mathcal Z} V_l^{\mathrm{core}}
\end{equation}
where $U_{\mathcal Z}\in\mathbb{R}^{n\times(n-1)}$ denotes the truncated Helmert matrix, which forms an orthonormal basis of subspace $\mathbf{1}_n^\perp$, and $U_l^{\mathrm{core}},V_l^{\mathrm{core}}\in\mathbb{R}^{(n-1)\times(n-1)}$ are orthogonal matrices. This factorization provides a complete parameterization of the target zero-marginal subspace $\mathcal{Z}_n$ with exact norm preservation (proof in Appendix~\ref{app:completeness}).

The problem therefore reduces to generating the orthogonal matrices 
$U_l^{\mathrm{core}}, V_l^{\mathrm{core}}$ and bounding $\Sigma_l$. 
Given the normalized residual input $\bm{x}'_l\in\mathbb{R}^{1\times nC}$ at $l$-th layer, we dynamically generate these three components:
\begin{empheq}[left=\empheqlbrace]{equation}
\begin{aligned}
&U_l^{\mathrm{core}} = \operatorname{Cayley}\Big(\operatorname{skew}\big( \gamma_l^U\tanh(\tau_l^U (\bm{x}_l'W_l^U) + \bm{b}_l^U) \big)\Big) \\
&V_l^{\mathrm{core}} = \operatorname{Cayley}\Big(\operatorname{skew}\big( \gamma_l^V\tanh(\tau_l^V(\bm{x}_l'W_l^V) + \bm{b}_l^V) \big)\Big) \\
&\Sigma_l = \operatorname{diag}\Big(\tanh\big(\tau_l^S(\bm{x}_l'W_l^S) + \bm{b}_l^S\big)\Big)
\end{aligned}
\end{empheq}
Here, $W_l^U, W_l^V \in \mathbb{R}^{nC \times k}$ (with $k=\frac{1}{2}(n-1)(n-2)$) and $W_l^S \in \mathbb{R}^{nC \times (n-1)}$ are learnable projection weights, while $\bm{b}_l^U, \bm{b}_l^V\in\mathbb{R}^{1\times k}$ and $\bm{b}_l^S\in\mathbb{R}^{1\times (n-1)}$ are learnable biases. The parameters $\tau_l^U, \tau_l^V, \tau_l^S$ are trainable scalar factors, and $\gamma_l^U, \gamma_l^V$ act as learnable rotation magnitude gates. The operator $\operatorname{skew}(\cdot)$ constructs an $(n-1) \times (n-1)$ skew-symmetric matrix by populating its strictly upper-triangular entries with the $k$ activated outputs and completing the lower triangle via anti-symmetry. $\operatorname{Cayley}(\cdot)$ is the Cayley transform which maps the constructed skew-symmetric matrices into orthogonal matrices $U_l^{\mathrm{core}}$ and $V_l^{\mathrm{core}}$. $\operatorname{diag}(\cdot)$ maps the output $n-1$ singular values onto the main diagonal, where the $\tanh(\cdot)$ bounds the singular values $|(\Sigma_l)_{i,i}|\le1$ for $i=1\dots n-1$, thus ensuring $\|\mathcal{H}_l^{\mathrm{disp}}\|_2 \le 1$.

Substituting the parametrized matrices and adding the translation $J$ yields the final residual matrix:
\begin{equation}
\mathcal{H}_l^{\mathrm{res}}=J+(U_{\mathcal Z} U_l^{\mathrm{core}})\Sigma_l(U_{\mathcal Z} V_l^{\mathrm{core}})^\top
\end{equation}

Finally, except for the above residual matrix $\mathcal{H}_l^{\mathrm{res}}$, our sHC keeps other structures of mHC unchanged. Figure~\ref{fig:overview} summarizes our sHC.


%% file: section/6_experiment.tex
\section{Experiment}
\textbf{Models and Datasets.} We evaluate sHC in language models and measure its impact on model performance and training efficiency across model scales and datasets. Due to computational constraints and following mHC-lite~\cite{yang2026mhc}, we build on the \texttt{nanoGPT} framework~\cite{karpathy2022nanogpt} and consider two model sizes: M (12 layers, 0.12B parameters), and L (24 layers, 0.36B parameters). 
Models are trained on FineWeb-Edu~\cite{penedo2024fineweb} and OpenWebText~\cite{Gokaslan2019OpenWeb}. We scale the training tokens proportionally to the model size, allocating a nearly $10\times$ token budget that yields approximately 1.3B tokens for the M model and 3.6B for the L model. This ensures comparable training regimes across model sizes and avoids under-training larger models.

\textbf{Baselines.} We evaluate the effectiveness of different residual connection paradigms including standard single-stream Residual Connections (RC~\cite{he2016deep}), unconstrained Hyper-Connections (HC~\cite{zhu2024hyper}), Manifold-Constrained Hyper-Connections using Sinkhorn iterations (mHC~\cite{xie2025mhc}), and its permutation-based variant (mHC-lite~\cite{yang2026mhc}). Aligned with their settings, the number of residual streams for all hyper-connections including our sHC is set to $n=4$.

\textbf{Initialization.} Following the original papers of HC/mHC/mHC-lite, we adopt their initialization schemes so that each variant reduces to a standard residual connection (identity mapping) at initialization. sHC is also set as identity mapping at initialization.

\textbf{Evaluation Metrics.} 
To assess training convergence, we report the final training and validation losses. Furthermore, to evaluate generalization and mitigate the bias of relying solely on the in-domain pre-training corpus, we compute the zero-shot perplexity of the trained models across five diverse out-of-distribution corpora: C4~\cite{raffel2020exploring}, Dolma V1.5~\cite{soldaini2024dolma}, Falcon RefinedWeb~\cite{penedo2023refinedweb}, RedPajama~\cite{weber2024redpajama}, and Wikitext-103~\cite{merity2016pointer} via the Paloma suite~\cite{paloma} within the \texttt{lm-eval} harness~\cite{eval-harness}.

Other hyperparameters and detailed initialization are provided in Appendix~\ref{app:hyperparameters}.
\begin{table}[t]
    \centering
    \caption{Loss of trained models at different scales under different residual connection paradigms. We report training and validation loss at the end of training, with training loss computed as a 200-iteration moving average to mitigate fluctuations.}
    
    \scalebox{0.85}{
    \begin{tabular}{ccccccccc}
    \toprule
     Dataset &\multicolumn{4}{c}{OpenWebText} &\multicolumn{4}{c}{FineWeb-Edu}  \\
     \cmidrule(lr){2-5} \cmidrule(lr){6-9}
     Model Scale & \multicolumn{2}{c}{M} &\multicolumn{2}{c}{L} & \multicolumn{2}{c}{M} &\multicolumn{2}{c}{L} \\
     \cmidrule(lr){2-3} \cmidrule(lr){4-5} \cmidrule(lr){6-7} \cmidrule(lr){8-9}
    &Train &Val &Train &Val &Train &Val &Train &Val \\
    \midrule
    RC &3.347 &3.328 &3.052 &3.066 &3.313 &3.325 &3.058 &3.048 \\
    HC &3.282 &3.264 &3.111 &3.132 &3.276 &3.288 &3.112 &3.111 \\
    mHC &3.268 &3.250 &3.000 &3.023 &3.241 &3.255 &3.017 &3.009\\
    mHC-lite &3.271 &3.252 &3.001 &3.023 &3.241 &3.254 &3.013 &3.006\\
    \textbf{sHC (Ours)} &\textbf{3.225} &\textbf{3.239} &\textbf{2.998} &\textbf{3.012} &\textbf{3.230} &\textbf{3.233} &\textbf{2.993} &\textbf{3.003} \\

    \bottomrule
    \end{tabular}
    }
    \label{tab:performance}
\end{table}
\subsection{Performance}
\label{sec:performance}
To evaluate different residual connection paradigms, we compare their training convergence (Table~\ref{tab:performance}) and zero-shot generalization capability (Table~\ref{tab:benchmark}). As shown, unconstrained HC disrupts identity-preserving property of residual connection, even underperforming the standard RC baseline at some cases. Although Birkhoff polytope constrained methods (mHC and mHC-lite) reliably improve upon RC, their rigid polytope constraint limits efficacy. In contrast, our sHC shows performance improvement in both training metrics and the generalization corpora. In particular, on the L scale, sHC achieves observable reductions in perplexity over baselines. This shows that our spectral norm sphere constraint achieves considerable expressivity. We further analyze the expressivity of sHC, which can be found in Appendix~\ref{app:expressivity}.
\begin{table}[h]
    \centering
    \caption{Zero-shot perplexity (PPL) on out-of-distribution corpora with trained models on FineWeb-Edu. Lower values indicate better performance.}
    \scalebox{0.76}{
    \begin{tabular}{ccccccccccc}
    \toprule
    Model Scale &\multicolumn{5}{c}{M} &\multicolumn{5}{c}{L} \\
    \cmidrule(lr){2-6} \cmidrule(lr){7-11}
    Benchmark &C4 &Dolma &Falcon &RedPajama &Wikitext &C4 &Dolma &Falcon &RedPajama &Wikitext \\
    \midrule
    RC &144.8 &397.0 &186.5 &3152.8 &92.1 &104.4 &260.7 &132.5 &1725.9 &62.0\\
    HC &138.8 &367.4 &178.6 &2890.2 &88.1 &113.6 &264.3 &144.6 &1517.8 &70.4\\
    mHC &132.0 &351.8 &168.8 &2757.8 &82.9 &98.9 &228.4 &125.3 &1288.4 &58.8\\
    mHC-lite &131.7 &\textbf{344.0} &168.5 &\textbf{2458.3} &81.6 &98.3 &223.0 &124.9 &1216.4 &58.0\\
    \textbf{sHC (Ours)} &\textbf{129.0} &358.8 &\textbf{165.4} &2785.6 &\textbf{79.0} &\textbf{97.2} &\textbf{218.8} &\textbf{123.3} &\textbf{1126.3} &\textbf{57.0}\\
    \bottomrule
    \end{tabular}
    }
    \vspace{-1em}
    \label{tab:benchmark}
\end{table}

\subsection{Stability Analysis}
\begin{wrapfigure}{r}{0.4\linewidth}
    \centering
    \includegraphics[width=\linewidth]{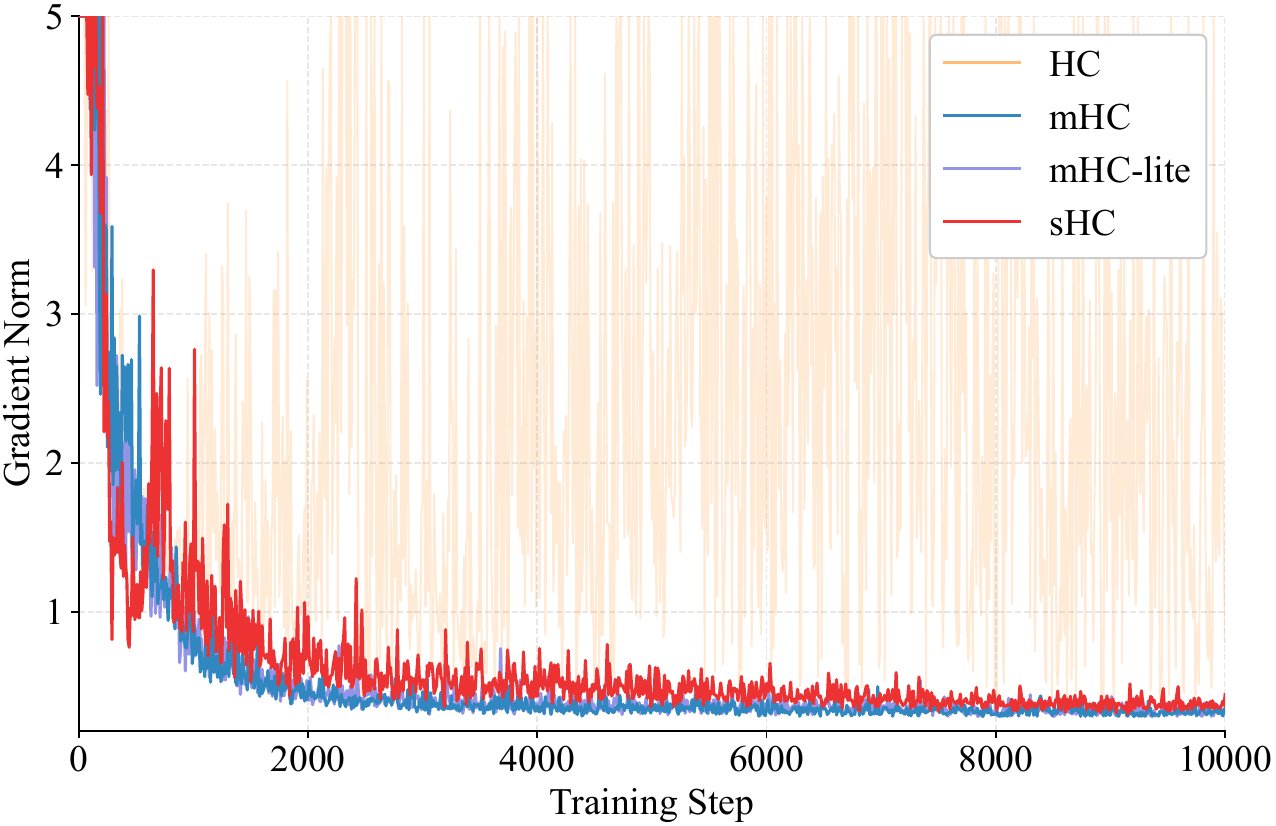}
    \caption{Gradient norm dynamics during training for the L model on OpenWebText. The unconstrained HC exhibits exploding gradients (light orange), clamped at 5.0 for visualization. Other residual connection paradigms show stable gradient trajectories.} 
    \label{fig:grad_norm}
\end{wrapfigure}
\textbf{Training Stability.} Figure~\ref{fig:grad_norm} shows the gradient norms during training. The unconstrained HC is unstable, and its gradient fluctuates severely. The constrained hyper-connections (sHC, mHC, and mHC-lite) stabilize the optimization. Notably, mHC and mHC-lite maintain minimal gradient norms from the outset. We attribute this persistently flat profile to their noticeable identity degradation, which likely hinders active feature mixing. In contrast, sHC exhibits an initial rise in gradient norm before converging to a stable level. We attribute this early increase to sHC actively exploring non-trivial feature interactions between residual streams, thereby achieving optimization stability without passively defaulting to the identity mapping.

\textbf{Hyper-Connection Stability.} We evaluate the layer-wise and propagation stability of residual matrices for mHC, mHC-lite and sHC. Results are shown in Figure~\ref{fig:stability}. As illustrated in the \textbf{Left} panel, mHC fails to strictly satisfy the doubly stochastic constraint despite 20 Sinkhorn iterations. The column sums of its layer-wise residual matrices $\mathcal{H}_l^\mathrm{res}$ deviate from 1.0, with a peak reaching 1.4. These column-sum deviations accumulate across layers as illustrated in the \textbf{Middle} panel: the composite mapping $\prod_{l=0}^{23}\mathcal{H}_{23-l}^\mathrm{res}$ across 24 layers for mHC exhibits a pronounced shift of its column sums away from 1.0, with accumulated outliers spiking to 1.6. In contrast, both sHC and mHC-lite maintain unit column sums for both the layer-wise residual matrices and the global composite mapping, thereby preserving the mean component of residual streams without signal diminishment or amplification.

Furthermore, the \textbf{Right} panel tracks the spectral norm of the cumulative composite mapping $\prod_{l=0}^{L-1}\mathcal{H}_{L-l}^\mathrm{res}$ across the first $L$ layers. It reveals that mHC fails to bound this cumulative norm, which increases steadily as network depth grows. Conversely, sHC and mHC-lite constrain the cumulative spectral norm around 1.0, effectively preventing depth-induced gradient amplification. 

However, although mHC-lite achieves strict stability, as shown in \S~\ref{sec:performance}, its performance remains limited. It also incurs a factorial parameterization overhead, making scalability to a larger number of residual streams impractical, as we discuss in the following.
\begin{figure}[t]
    \centering
    \begin{minipage}[c]{0.25\textwidth}
        \centering
        \includegraphics[width=\linewidth]{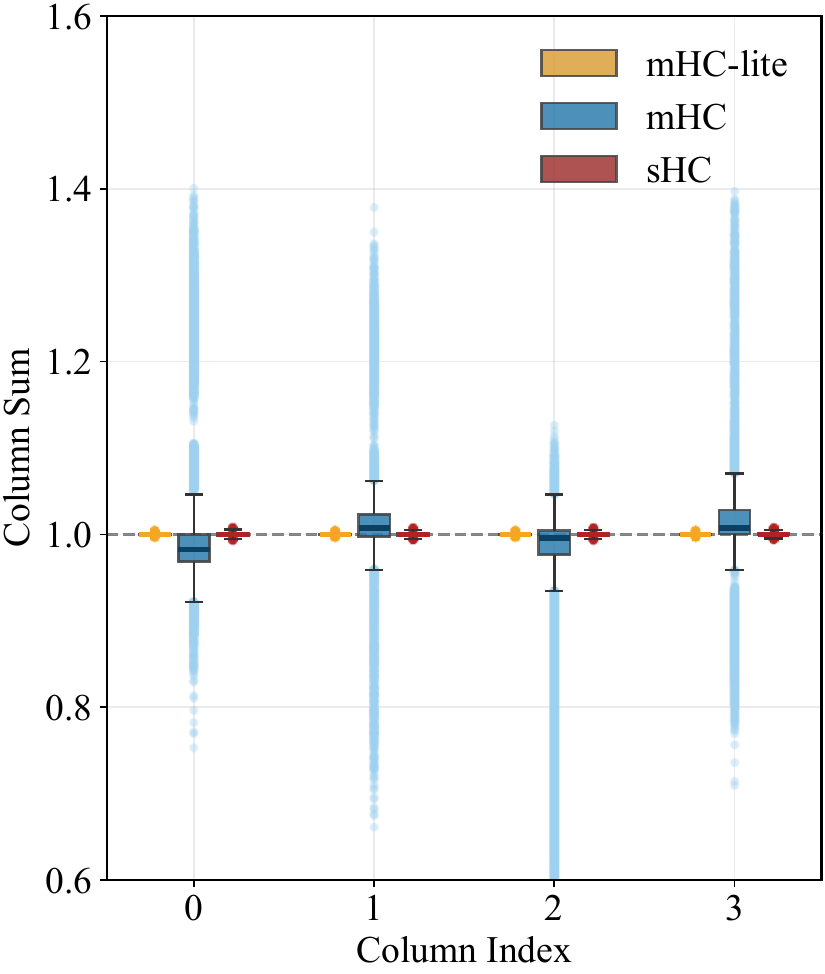}
    \end{minipage}
    \begin{minipage}[c]{0.25\textwidth}
    \centering
        \includegraphics[width=\linewidth]{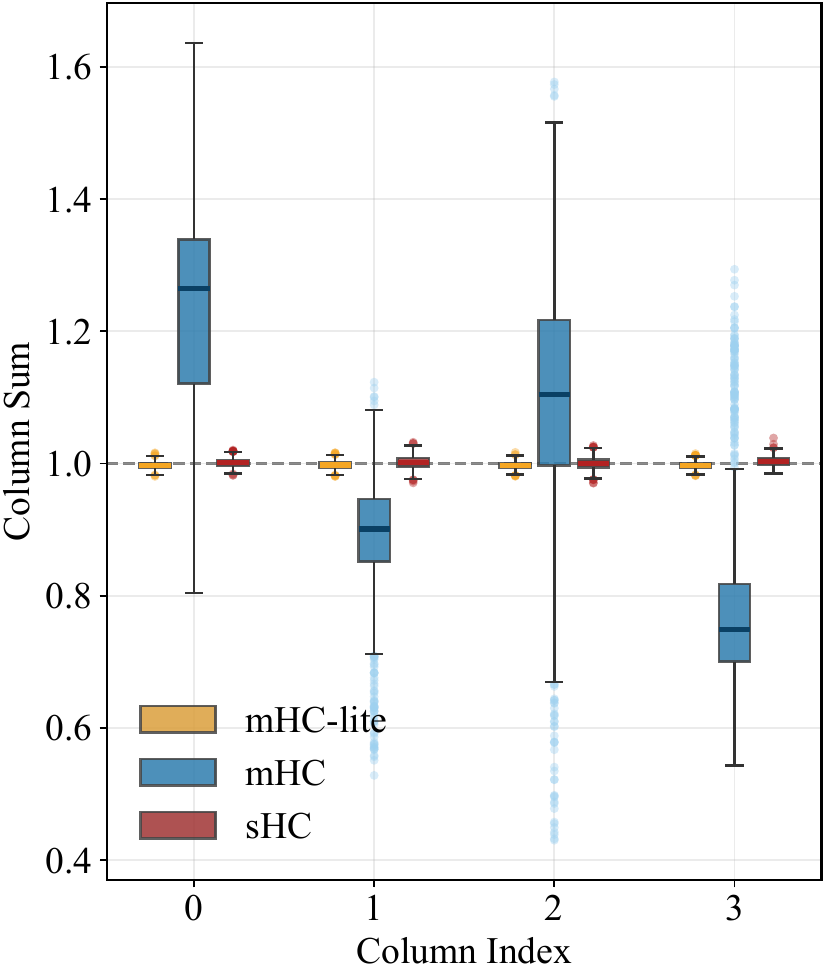}
    \end{minipage}
    \begin{minipage}[c]{0.35\textwidth}
        \centering
        \includegraphics[width=\linewidth]{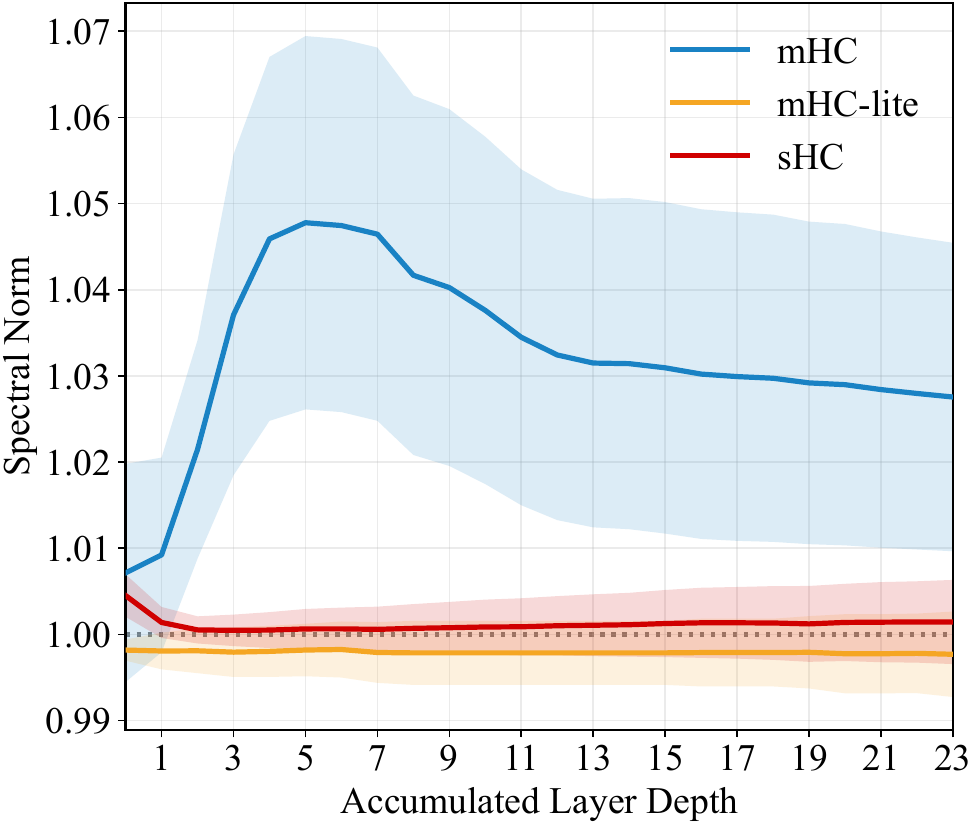}
    \end{minipage}
    \caption{Empirical validation of the stability of constrained residual matrices for mHC, mHC-lite, and sHC. Statistics are computed from the trained L model with 4 residual streams on the 1024 samples from the validation set. \textbf{Left:} Column-sum distribution of layer-wise residual matrices $\mathcal{H}_l^\mathrm{res}$ for each layer. \textbf{Middle:} Column-sum distribution of the full-model composite mapping $\prod_{l=0}^{23}\mathcal{H}_{23-l}^\mathrm{res}$. \textbf{Right:} Spectral norm of the cumulative composite mapping $\|\prod_{l=0}^{L-1}\mathcal{H}_{L-l}^\mathrm{res}\|_2$ across the first $L$ layers. Lines denote means across samples while shaded regions indicate standard deviations.}
    \label{fig:stability}
\end{figure}

\subsection{Efficiency and Scalability}
\begin{figure}
    \centering
    \includegraphics[width=0.63\linewidth]{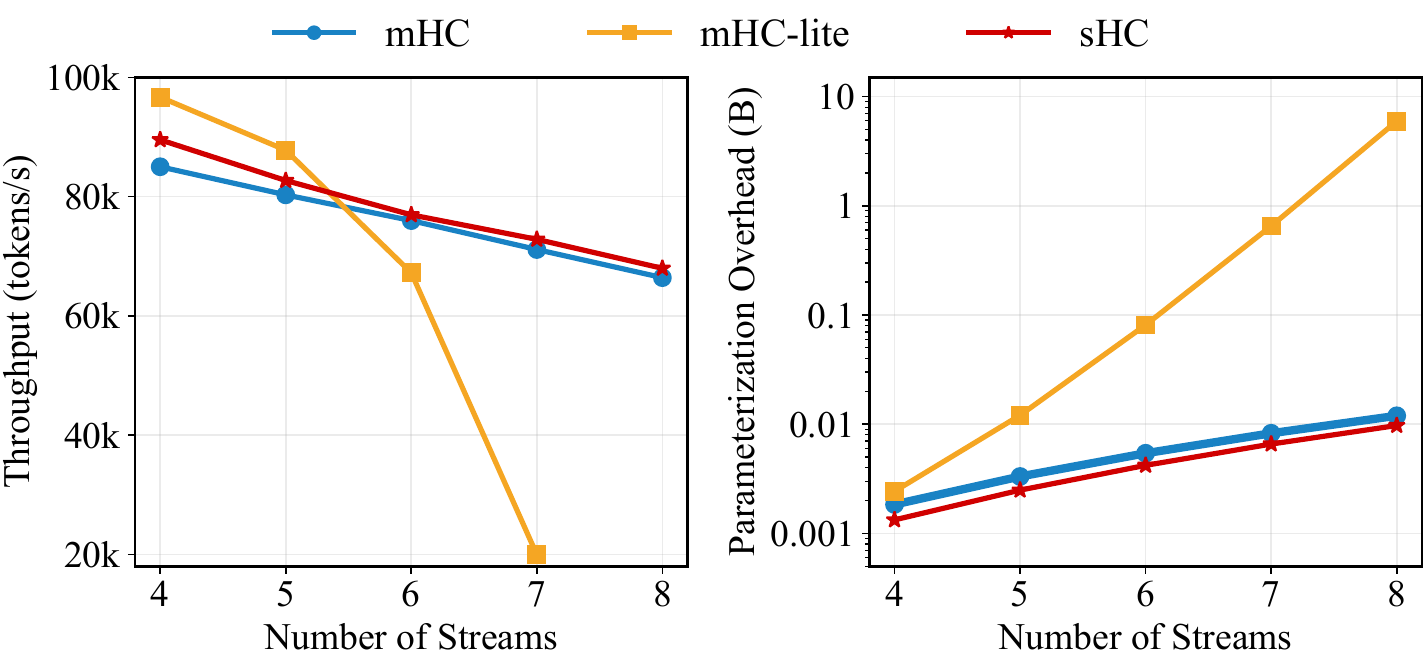}
    \caption{Training throughput (tokens/s) and parameterization overhead across varying numbers of streams $n$. 
    \textbf{Left:} Training throughput evaluated on an 8$\times$A6000 setup with a fixed global batch size. The data point for mHC-lite at $n=8$ is omitted, as its factorial growth in parameterization exceeds the available GPU memory budget and results in an Out-Of-Memory failure. 
    \textbf{Right:} Parameterization overhead (params) of the hyper-connections, excluding base model weights.}
    \label{fig:efficiency}
    \vspace{-1em}
\end{figure}
We evaluate the end-to-end training throughput (tokens/s) and parameterization overhead (params) of mHC, mHC-lite, and sHC based on the M model (0.12B). All experiments are conducted on an 8$\times$A6000 GPU setup with a fixed sequence length of 1024 tokens and a constant global batch size (micro-batch 8, gradient accumulation 16).

As shown in Figure~\ref{fig:efficiency}, sHC and mHC scale stably with compact parameterizations. For sHC, the projection weights ($W_l^U, W_l^V, W_l^S$) have a combined parameter complexity corresponding to $\mathbb{R}^{nC\times (n-1)^2}$. Similar to mHC (which utilizes $W_l^{\mathrm{res}}\in \mathbb{R}^{nC\times n^2}$), they grow polynomially with the number of streams $n$ ($\mathcal{O}(n^3)$), in contrast to mHC-lite whose weights $W_l^{\mathrm{res}}\in \mathbb{R}^{nC\times n!}$ grow factorially ($\mathcal{O}(n\cdot n!)$). This difference in scaling explains why sHC and mHC maintain stable throughput, whereas mHC-lite throughput degrades rapidly beyond $n=6$. At $n=8$, the auxiliary parameters of mHC-lite reach 6.07B, approximately 51 times the size of the base model, exceeding our GPU memory and resulting in an Out-of-Memory failure. The factorial growth is intrinsic to the mHC-lite parameterization and occurs regardless of base model size. By avoiding this factorial design, sHC alleviates the scalability limitations inherent to mHC-lite.



%% file: section/7_conclusion.tex
\section{Conclusion}
In this paper, we introduce Spectral-Sphere-Constrained Hyper-Connections (sHC) to eliminate the parameterization overhead, and alleviate identity degeneration, and expressivity bottleneck issues inherent to prior manifold-constraint hyper-connection methods. By executing a geometric shift from a rigid polytope to a affine-constrained spectral norm sphere, sHC overcomes these limitations. We hope that the expressivity, stability, and scalability of sHC potentially illuminate new pathways for extending residual designs toward the evolution of next-generation foundational architectures.

%% file: section/appendix.tex
\newpage
\appendix

\section{Expressivity Analysis}
\label{app:expressivity}
\begin{figure}[h]
    \centering
    \includegraphics[width=\linewidth]{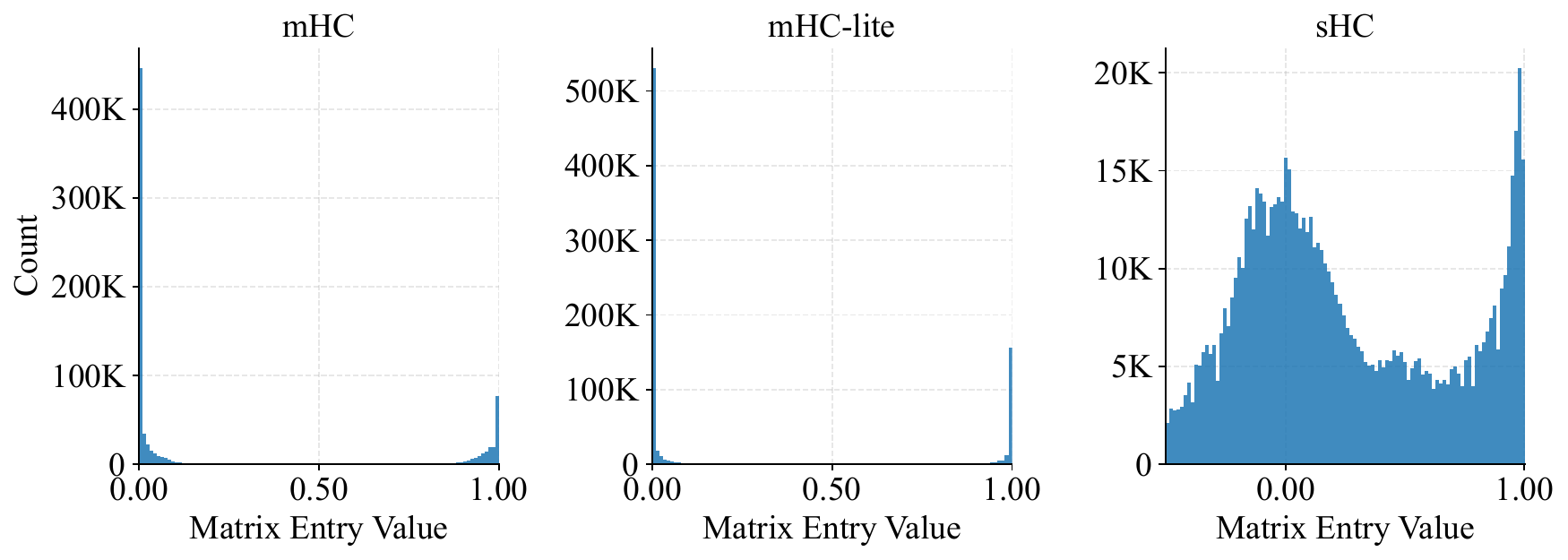}
    \caption{Distribution of layer-wise residual matrix entries across mHC, mHC-lite, and our proposed sHC.}
    \label{fig:entry_distribution}
\end{figure}

\begin{figure}[h]
    \centering
    \includegraphics[width=0.96\linewidth]{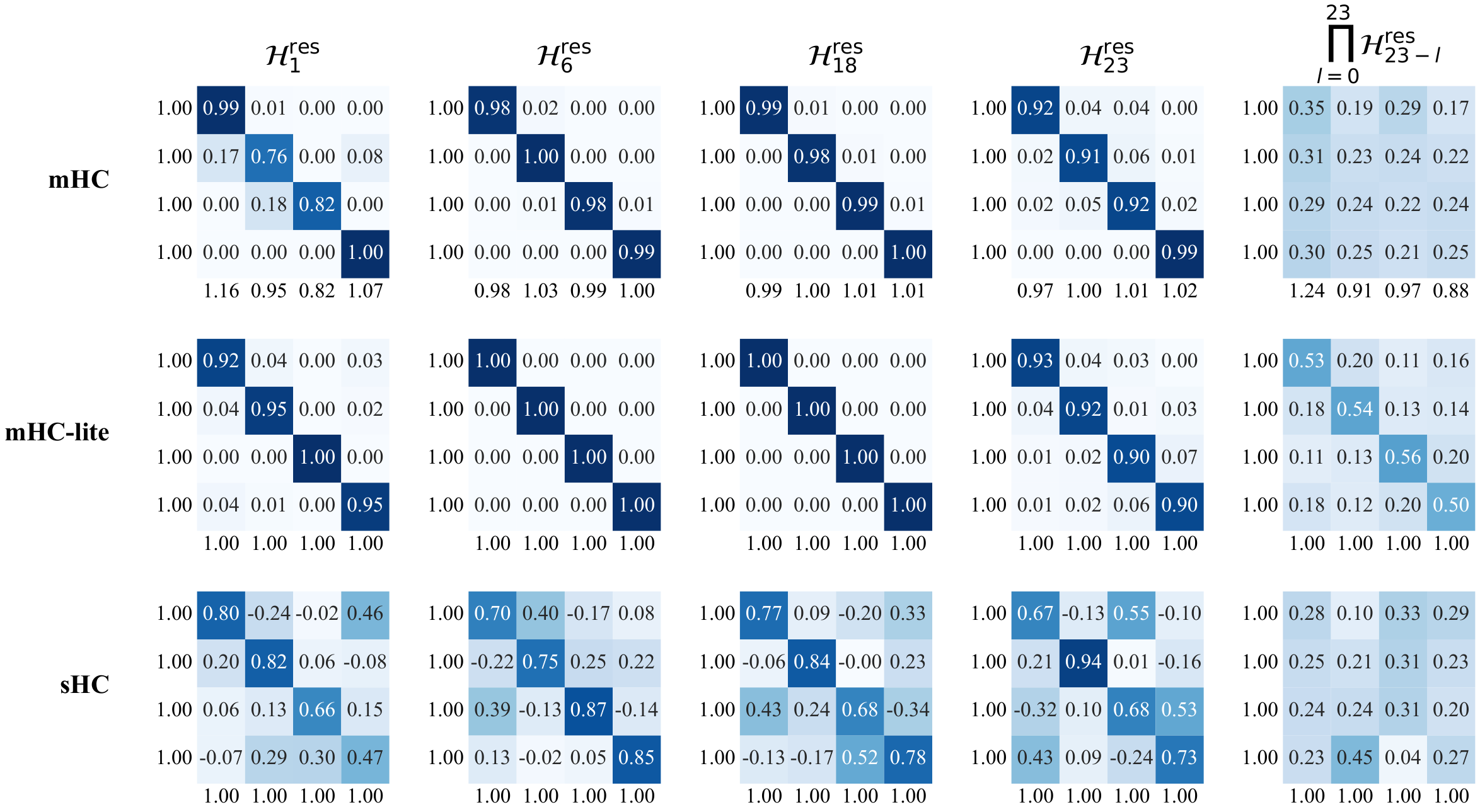}
    \caption{Visualization of layer-wise residual matrices and composite mapping. This figure displays single-layer residual matrices $\mathcal{H}_l^\mathrm{res}$ at depths ($l\in \{1, 6, 18, 23\}$) and their end-to-end composite mappings ($\prod_{l=0}^{23}\mathcal{H}_{23-l}^{\mathrm{res}}$) for mHC (top row), mHC-lite (middle row), and our proposed sHC (bottom row). Each matrix is computed by averaging over all tokens within a sequence. The labels annotated along the y-axis and x-axis indicate the row sum and the column sum, respectively.}
    \label{fig:matrix_vis}
\end{figure}
\textbf{Residual Matrix Dynamics.}
We analyze the entry distributions (Figure~\ref{fig:entry_distribution}) and heatmap visualizations (Figure~\ref{fig:matrix_vis}) of the learned layer-wise residual matrices $\mathcal{H}_l^{\mathrm{res}}$, evaluated on 1024 samples from the trained large model. For both mHC and mHC-lite, matrix entries sharply concentrate around 0.00 and 1.00, producing sparse and near-identity layer-wise residual matrices. These observations are consistent with the identity degeneration discussed in \S~\ref{sec:observaton}. In contrast, sHC exhibits a continuous entry distribution with a noticeable portion of values in the negative region, indicating that it leverages the extended value space to enable subtractive feature interactions. Importantly, this increased expressivity does not compromise stability. As shown in Figure~\ref{fig:matrix_vis}, sHC maintains exact row and column sums of 1.00 across all depths, whereas mHC accumulates normalization errors, resulting in composite column sums ranging from 0.82 to 1.24.
\begin{figure}[h]
    \centering
    \includegraphics[width=0.80\linewidth]{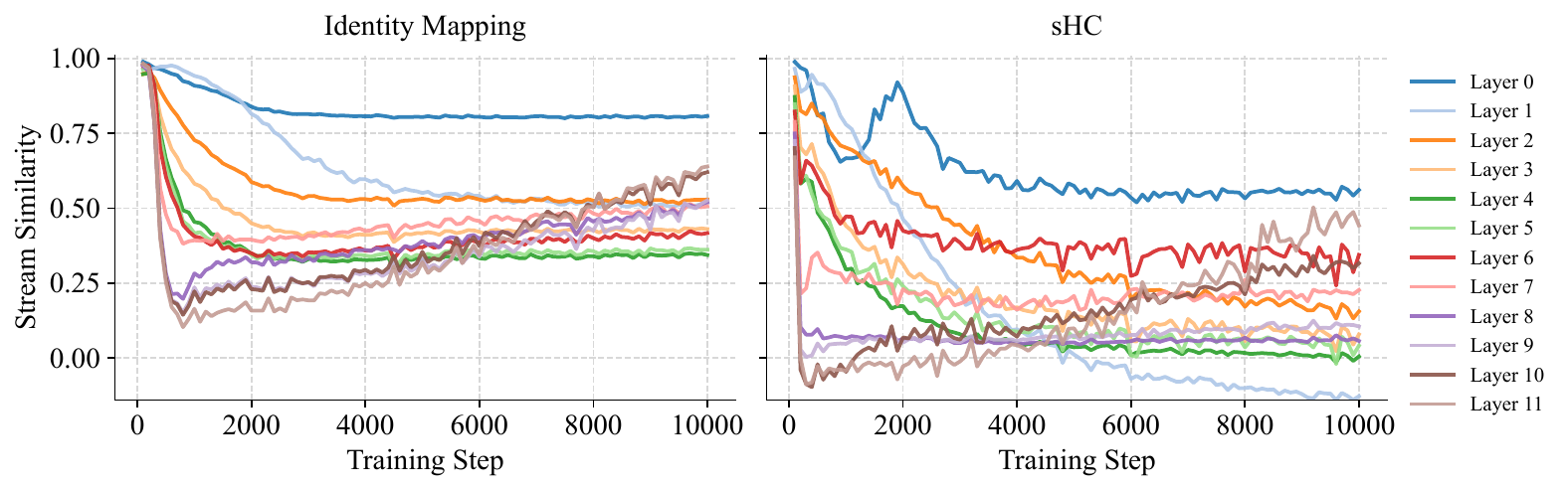}
    \caption{Mean pairwise cosine similarity among residual streams after being mixed by $\mathcal{H}_l^{\mathrm{res}}$. The left shows the baseline with identity mapping (where $\mathcal{H}_l^{\mathrm{res}}$ is fixed as an identity matrix while keeping all other settings identical to sHC). The right shows our sHC. Each colored line tracks a layer depth.}
    \label{fig:stream_sim_shc}
\end{figure}

\textbf{Expressivity of sHC.}
We track the mean pairwise cosine similarity among residual streams throughout the training process to examine the impact of sHC on feature evolution. As Figure~\ref{fig:stream_sim_shc} illustrates, compared to the identity mapping baseline (where the residual matrix in each layer is fixed as an identity matrix), the introduction of sHC drives inter-stream similarity to lower converging values across layers. This distinct decorrelation phenomenon provides empirical support for our hypothesis regarding the expressivity bottleneck of mHC and mHC-lite formulated in \S~\ref{sec:observaton}. Specifically, while the doubly stochastic constraint in mHC and mHC-lite forcibly averages features and induces a persistent upward bias in similarity, sHC allows residual streams to decorrelate by leveraging its extended signed interactions. By structurally accommodating these negative interactions, sHC promotes the diversification of representations that is essential to mitigate representation collapse.

\section{Observation: Stream Similarity}
\label{app:stream_sim}
\begin{figure}[h]
    \centering
    \includegraphics[width=0.80\linewidth]{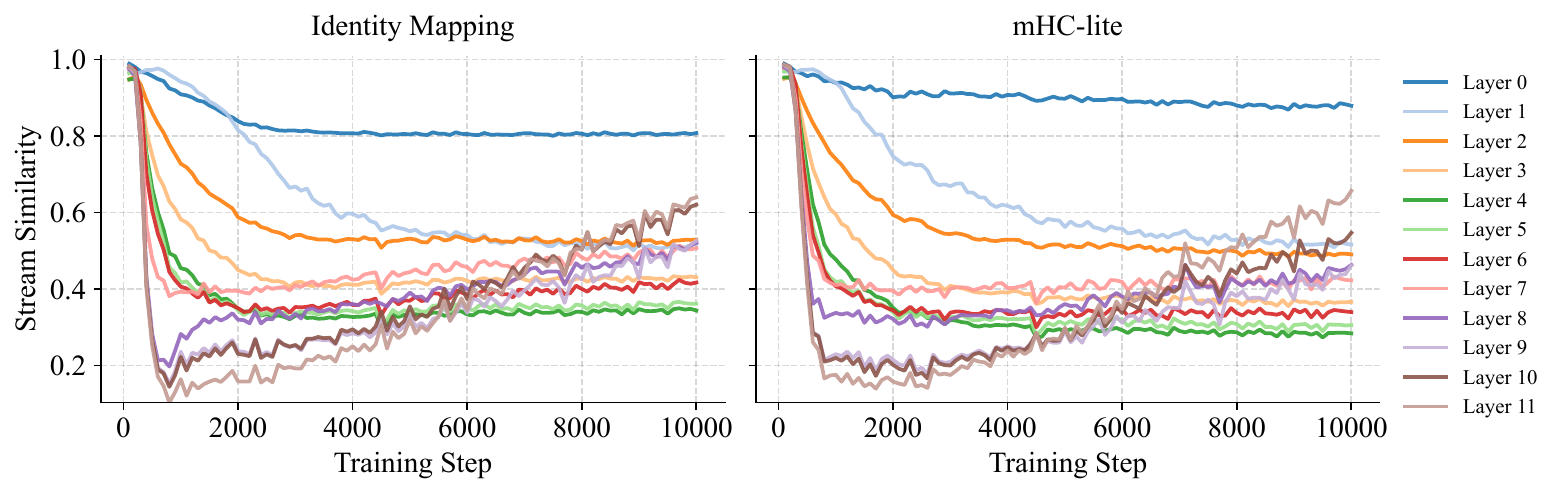}
    \caption{Mean pairwise cosine similarity among residual streams after being mixed by $\mathcal{H}_l^{\mathrm{res}}$. The left shows the baseline with identity mapping (where $\mathcal{H}_l^{\mathrm{res}}$ is fixed as an identity matrix while keeping all other settings identical to mHC-lite). The right shows mHC-lite. Each colored line tracks a layer depth.}
\end{figure}

\section{Proof of Closure}
\label{app:closure}

\begin{proof}
For any $\mathcal H_1,\mathcal H_2\in\mathcal S_n$, the affine constraints are preserved under multiplication:
\begin{equation}
    \begin{aligned}
        (\mathcal H_1\mathcal H_2)\mathbf1_n
&=
\mathcal H_1(\mathcal H_2\mathbf1_n)
=
\mathcal H_1\mathbf1_n
=
\mathbf1_n, \\
\mathbf1_n^\top(\mathcal H_1\mathcal H_2)
&=
(\mathbf1_n^\top\mathcal H_1)\mathcal H_2
=
\mathbf1_n^\top\mathcal H_2
=
\mathbf1_n^\top .
    \end{aligned}
\end{equation}

Hence $\mathcal H_1\mathcal H_2\in\mathcal A_n$.

For the spectral norm, submultiplicativity gives
\begin{equation}
    \|\mathcal H_1\mathcal H_2\|_2
\le
\|\mathcal H_1\|_2\|\mathcal H_2\|_2
=1.
\end{equation}

On the other hand, since $\mathcal H_1\mathcal H_2\mathbf1_n=\mathbf1_n$, we have
\begin{equation}
    \|\mathcal H_1\mathcal H_2\|_2
=
\sup_{\bm x\ne0}
\frac{\|\mathcal H_1\mathcal H_2\bm x\|_2}{\|\bm x\|_2}
\ge
\frac{\|\mathcal H_1\mathcal H_2\mathbf1_n\|_2}{\|\mathbf1_n\|_2}
=
1.
\end{equation}

Therefore
\begin{equation}
    1\le \|\mathcal H_1\mathcal H_2\|_2 \le 1,
\end{equation}
which implies $\|\mathcal H_1\mathcal H_2\|_2=1$ and hence $\mathcal H_1\mathcal H_2\in\mathcal S_n$.
\end{proof}

\section{Proof of Affine Translation}
\label{app:affine_trans}

\begin{proof}

Let \(J=\frac1n\mathbf1_n\mathbf1_n^\top\), which satisfies
\begin{equation}
    J\mathbf1_n=\mathbf1_n, \qquad \mathbf1_n^\top J=\mathbf1_n^\top .
\end{equation}

For any matrix \(\mathcal H\),
\begin{equation}
\mathcal H\in\mathcal A_n
\;\Longleftrightarrow\;
(\mathcal H-J)\mathbf1_n=\mathbf0,\;
\mathbf1_n^\top(\mathcal H-J)=\mathbf0^\top
\;\Longleftrightarrow\;
\mathcal H-J\in\mathcal Z_n .
\end{equation}

Therefore,
\begin{equation}
\mathcal A_n = J + \mathcal Z_n .
\end{equation}
\end{proof}

\section{Proof of Proposition 1}
\label{app:bound_proof}

\begin{proof}
Let $\bm{x} \in \mathbb{R}^n$, which can be orthogonally decomposed $\bm{x} = \bm{x}_{\parallel} + \bm{x}_{\perp}$, where $\bm{x}_{\parallel} \in \mathrm{span}\{\mathbf{1}_n\}$ and $\bm{x}_{\perp} \in \mathbf{1}_n^\perp$. Thus, $\|\bm{x}\|_2^2 = \|\bm{x}_{\parallel}\|_2^2 + \|\bm{x}_{\perp}\|_2^2$.

Since $J = \frac{1}{n}\mathbf{1}_n\mathbf{1}_n^\top$ and $\mathcal{H}_l^{\mathrm{disp}} \in \mathcal{Z}_n$, we have:
\begin{equation}
    J\bm{x}_{\perp} = \mathbf{0}_n, \quad \mathcal{H}_l^{\mathrm{disp}}\bm{x}_{\parallel} = \mathbf{0}_n, \quad \text{and} \quad \mathbf{1}_n^\top \mathcal{H}_l^{\mathrm{disp}}\bm{x}_{\perp} = 0.
\end{equation}

Applying $\mathcal{H}_l^{\mathrm{res}} = J + \mathcal{H}_l^{\mathrm{disp}}$ to $\bm{x}$, the action decouples:
\begin{equation}
    \mathcal{H}_l^{\mathrm{res}}\bm{x} = (J + \mathcal{H}_l^{\mathrm{disp}})(\bm{x}_{\parallel} + \bm{x}_{\perp}) = J\bm{x}_{\parallel} + \mathcal{H}_l^{\mathrm{disp}}\bm{x}_{\perp}.
\end{equation}

Since $J\bm{x}_{\parallel} \in \mathrm{span}\{\mathbf{1}_n\}$ and $\mathcal{H}_l^{\mathrm{disp}}\bm{x}_{\perp} \in \mathbf{1}_n^\perp$, the output terms are mutually orthogonal. Consequently:
\begin{equation}
\begin{aligned}
    \|\mathcal{H}_l^{\mathrm{res}}\bm{x}\|_2^2 &= \|J\bm{x}_{\parallel}\|_2^2 + \|\mathcal{H}_l^{\mathrm{disp}}\bm{x}_{\perp}\|_2^2 \\
    &\le \|J\|_2^2 \|\bm{x}_{\parallel}\|_2^2 + \|\mathcal{H}_l^{\mathrm{disp}}\|_2^2 \|\bm{x}_{\perp}\|_2^2 \\
    &\le \max\left(\|J\|_2^2, \|\mathcal{H}_l^{\mathrm{disp}}\|_2^2\right) \left(\|\bm{x}_{\parallel}\|_2^2 + \|\bm{x}_{\perp}\|_2^2\right) \\
    &= \max\left(\|J\|_2^2, \|\mathcal{H}_l^{\mathrm{disp}}\|_2^2\right) \|\bm{x}\|_2^2.
\end{aligned}
\end{equation}

Dividing by $\|\bm{x}\|_2^2$ and taking the supremum over $\bm{x} \ne \mathbf{0}_n$ explicitly establishes the upper bound:
\begin{equation}
    \|\mathcal{H}_l^{\mathrm{res}}\|_2 = \sup_{\bm{x} \ne \mathbf{0}} \frac{\|\mathcal{H}_l^{\mathrm{res}}\bm{x}\|_2}{\|\bm{x}\|_2} \le \max\left(\|J\|_2, \|\mathcal{H}_l^{\mathrm{disp}}\|_2\right).
\end{equation}

To establish the lower bound, we evaluate the quotient over specific subspaces.

For any non-zero vector $\bm{u} \in \mathrm{span}\{\mathbf{1}_n\}$, we have $\mathcal{H}_l^{\mathrm{disp}}\bm{u} = \mathbf{0}_n$, which implies $\mathcal{H}_l^{\mathrm{res}}\bm{u} = J\bm{u}.$
Therefore,
\begin{equation}
    \|\mathcal{H}_l^{\mathrm{res}}\|_2 = \sup_{\bm{x}\neq\mathbf{0}} \frac{\|\mathcal{H}_l^{\mathrm{res}}\bm{x}\|_2}{\|\bm{x}\|_2} \ge \sup_{\bm{u}\in\mathrm{span}\{\mathbf{1}_n\},\,\bm{u}\neq\mathbf{0}} \frac{\|J\bm{u}\|_2}{\|\bm{u}\|_2} = \|J\|_2.
\end{equation}

Similarly, for any non-zero vector $\bm{v} \in \mathbf{1}_n^\perp$, we have $J\bm{v} = \mathbf{0}$, which implies $\mathcal{H}_l^{\mathrm{res}}\bm{v} = \mathcal{H}_l^{\mathrm{disp}}\bm{v}.$
Thus,
\begin{equation}
    \|\mathcal{H}_l^{\mathrm{res}}\|_2 = \sup_{\bm{x}\neq\mathbf{0}} \frac{\|\mathcal{H}_l^{\mathrm{res}}\bm{x}\|_2}{\|\bm{x}\|_2} \ge \sup_{\bm{v}\in\mathbf{1}_n^\perp,\,\bm{v}\neq\mathbf{0}} \frac{\|\mathcal{H}_l^{\mathrm{disp}}\bm{v}\|_2}{\|\bm{v}\|_2} = \|\mathcal{H}_l^{\mathrm{disp}}\|_2.
\end{equation}

Combining these conditions yields $\|\mathcal{H}_l^{\mathrm{res}}\|_2 \ge \max\left(\|J\|_2, \|\mathcal{H}_l^{\mathrm{disp}}\|_2\right)$. Since both the upper and lower bounds hold, exact equality is established:
\begin{equation}
    \|\mathcal{H}_l^{\mathrm{res}}\|_2 = \max\left(\|J\|_2, \|\mathcal{H}_l^{\mathrm{disp}}\|_2\right).
\end{equation}
\end{proof}

\section{Proof of Completeness and Spectral Preservation}
\label{app:completeness}
\begin{proof}
We prove that the parameterization
\begin{equation}
\mathcal{H}_l^{\mathrm{disp}} = (U_{\mathcal Z} U_l^{\mathrm{core}}) \, \Sigma_l \, (U_{\mathcal Z} V_l^{\mathrm{core}})^\top
\end{equation}
covers the subspace $\mathcal{Z}_n$ and preserves the spectral norm.

\textbf{1. Completeness.} \\
For any $\mathcal{H}_l^{\mathrm{disp}} \in \mathcal{Z}_n$ with rank $r \le n-1$, consider its strict compact SVD:
\begin{equation}
\mathcal{H}_l^{\mathrm{disp}} = \tilde{U}_l \, \tilde{\Sigma}_l \, \tilde{V}_l^\top,
\end{equation}
where $\tilde{U}_l, \tilde{V}_l \in \mathbb R^{n\times r}$ satisfy $\tilde{U}_l^\top \tilde{U}_l = \tilde{V}_l^\top \tilde{V}_l = I_{r}$, and $\tilde{\Sigma}_l \in \mathbb R^{r \times r}$ is a diagonal matrix containing only non-zero singular values.

From $\mathcal{H}_l^{\mathrm{disp}} \mathbf{1}_n = \mathbf{0}_n$ and $\mathbf{1}_n^\top \mathcal{H}_l^{\mathrm{disp}} = \mathbf{0}_n^\top$, we obtain
\begin{equation}
\tilde{U}_l \, \tilde{\Sigma}_l \, (\tilde{V}_l^\top \mathbf{1}_n) = \mathbf{0}_n, 
\quad
(\mathbf{1}_n^\top \tilde{U}_l) \, \tilde{\Sigma}_l \, \tilde{V}_l^\top = \mathbf{0}_n^\top.
\end{equation}
Since $\tilde{U}_l$ and $\tilde{V}_l$ have full column rank and $\tilde{\Sigma}_l$ is invertible, this implies
\begin{equation}
\tilde{V}_l^\top \mathbf{1}_n = \mathbf{0}_r, \quad \mathbf{1}_n^\top \tilde{U}_l = \mathbf{0}_r^\top,
\end{equation}
hence the column spaces of $\tilde{U}_l$ and $\tilde{V}_l$ lie in $\mathbf{1}_n^\perp$.

To align with the parameterization size $n-1$, since $\mathbf 1_n^\perp$ is an $(n-1)$-dimensional subspace, we can find $(n-1-r)$ orthonormal vectors in $\mathbf 1_n^\perp$ to expand $\tilde{U}_l$ and $\tilde{V}_l$ into $U_l, V_l \in \mathbb R^{n\times (n-1)}$, such that $\mathrm{col}(U_l), \mathrm{col}(V_l) \subset \mathbf 1_n^\perp$ and $U_l^\top U_l=V_l^\top V_l=I_{n-1}$. By padding $\tilde{\Sigma}_l$ with zeros to form an $(n-1)\times (n-1)$ diagonal matrix $\Sigma_l$, we have equivalently:
\begin{equation}
\mathcal{H}_l^{\mathrm{disp}} = U_l \, \Sigma_l \, V_l^\top.
\end{equation}

Since $\mathrm{col}(U_l), \mathrm{col}(V_l) \subset \mathbf{1}_n^\perp$ and $U_{\mathcal Z}$ is an orthonormal basis of $\mathbf{1}_n^\perp$, we have
\begin{equation}
U_l = U_{\mathcal Z} \, U_{\mathcal Z}^\top U_l, \quad
V_l = U_{\mathcal Z} \, U_{\mathcal Z}^\top V_l.
\end{equation}
Define $U_l^{\mathrm{core}} = U_{\mathcal Z}^\top U_l$ and $V_l^{\mathrm{core}} = U_{\mathcal Z}^\top V_l$. Then
\begin{equation}
\begin{aligned}
(U_l^{\mathrm{core}})^\top U_l^{\mathrm{core}}
&=U_l^\top(U_{\mathcal Z}U_{\mathcal Z}^\top U_l)
=U_l^\top U_l
=I_{n-1}, \\
(V_l^{\mathrm{core}})^\top V_l^{\mathrm{core}}
&=V_l^\top(U_{\mathcal Z}U_{\mathcal Z}^\top V_l)
=V_l^\top V_l
=I_{n-1}.
\end{aligned}
\end{equation}

Therefore any $\mathcal{H}_l^{\mathrm{disp}}\in\mathcal Z_n$ can be written as
\begin{equation}
\mathcal{H}_l^{\mathrm{disp}}=(U_{\mathcal Z}U_l^{\mathrm{core}}) \Sigma_l (U_{\mathcal Z}V_l^{\mathrm{core}})^\top.
\end{equation}

\textbf{2. Spectral norm preservation.}

Given the parameterization $U_l=U_{\mathcal Z}U_l^{\mathrm{core}}$ and $V_l=U_{\mathcal Z}V_l^{\mathrm{core}}$ with orthogonal core matrices, we have:
\begin{equation}
U_l^\top U_l
=(U_l^{\mathrm{core}})^\top(U_{\mathcal Z}^\top U_{\mathcal Z})U_l^{\mathrm{core}}
=I_{n-1},
\end{equation}
\begin{equation}
V_l^\top V_l
=(V_l^{\mathrm{core}})^\top(U_{\mathcal Z}^\top U_{\mathcal Z})V_l^{\mathrm{core}}
=I_{n-1},
\end{equation}
Thus both $U_l$ and $V_l$ have exact orthonormal columns. Then the spectral norm satisfies:
\begin{equation}
\|\mathcal H_l^{\mathrm{disp}}\|_2
=\|U_l\Sigma_lV_l^\top\|_2
=\|\Sigma_l\|_2
=\max_i|(\Sigma_l)_{i,i}|.
\end{equation}
Hence bounding $\Sigma_l$ directly controls the spectral norm of $\mathcal H_l^{\mathrm{disp}}$.
\end{proof}


\section{Experiment Setup}
\label{app:hyperparameters}

\textbf{Initialization.} In all cases, $W_l^{\mathrm{res}}$, $W_l^{\mathrm{pre}}$, and $W_l^{\mathrm{post}}$ are initialized to zero; $\alpha_l^{\mathrm{res}}$, $\alpha_l^{\mathrm{pre}}$, and $\alpha_l^{\mathrm{post}}$ are set to 0.01; and $\bm{b}_l^{\mathrm{pre}}$ and $\bm{b}_l^{\mathrm{post}}$ are initialized to $-1$ except for a single entry set to $1$.
The residual branch differs across variants. In HC, $\bm{b}_l^{\mathrm{res}}$ is initialized to $I$. In mHC, off-diagonal entries are set to $-8$ and diagonal entries to $0$, yielding an identity-like matrix after Sinkhorn normalization. For mHC-lite, $\bm{b}_l^{\mathrm{res}}$ is set to $-8$ for all entries except the entry corresponding to the identity matrix, which is set to $0$, so that after the softmax operation the weights concentrate on the identity matrix. 

For sHC initialization, projection weights $W_l^U, W_l^V, W_l^S$ are zero-initialized. We set the biases $\bm{b}_l^U, \bm{b}_l^V$ to zero, so that $U_l^{\mathrm{core}},V_l^{\mathrm{core}}$ reduce to identity after the Cayley transform. Simultaneously, we initialize $\bm{b}_l^S$ to 4, saturating the $\tanh$ activation to yield an identity matrix for $\Sigma_l$. Finally, the rotation magnitude gates $\gamma_l^U, \gamma_l^V$ are fixed at 1, and the scaling factors $\tau_l^U, \tau_l^V, \tau_l^S$ are set to 0.01.

\textbf{Training Setup.}
Our implementation builds upon \texttt{nanoGPT}~\cite{karpathy2022nanogpt}, with all unspecified hyperparameters left at their default values. Models are trained from scratch using the AdamW optimizer with a cosine learning rate schedule and linear warmup. We employ mixed-precision training (bfloat16) and apply gradient clipping throughout training.

All experiments are conducted on 8 NVIDIA A6000 GPUs using PyTorch DistributedDataParallel (DDP) with the NCCL backend. We fix the global batch size across all methods. The hyperparameters of training are summarized in Table~\ref{tab:training_hyperparameters}.
\begin{table}[h]
\centering
\caption{Training hyperparameters.}
\begin{tabular}{lcc}
\toprule
Setting &M &L \\
\midrule
Micro-batch size per GPU & 8 & 8\\
Gradient accumulation steps & 16 & 48\\
Sequence length & 1024 & 1024\\
Training iterations & 10000 & 10000\\
LR decay iterations & 10000 & 10000\\
Warmup iterations & 200 & 200\\
Weight decay & 0.1 & 0.1\\
$\beta_1$ & 0.9 & 0.9\\
$\beta_2$ & 0.95 & 0.95\\
Gradient clipping norm & 1.0 & 1.0\\
Initial learning rate & $6\times10^{-4}$ & $3\times10^{-4}$ \\
Minimum learning rate & $6\times10^{-5}$ & $3\times10^{-5}$ \\
\bottomrule
\end{tabular}
\label{tab:training_hyperparameters}
\end{table}

\textbf{Model Configurations.}
We evaluated two model scales: Medium (M) and Large (L). Their architecture-specific configurations are listed in Table~\ref{tab:model_configurations}.
\begin{table}[h]
\centering
\caption{Architecture configurations for the M, and L models.}
\begin{tabular}{lcc}
\toprule
Configuration & M & L \\
\midrule
Number of layers & 12 & 24 \\
Number of attention heads & 12 & 16 \\
Hidden dimension & 768 & 1024 \\
\bottomrule
\end{tabular}
\label{tab:model_configurations}
\end{table}